\definecolor{red}{HTML}{E41A1C}
\definecolor{orange}{HTML}{FF7F00}
\definecolor{yellow}{HTML}{FFC020}
\definecolor{green}{HTML}{4DAF4A}
\definecolor{blue}{HTML}{377EB8}
\definecolor{purple}{HTML}{984EA3}
\theoremstyle{plain}
\theoremstyle{remark}
\Crefname{algocf}{Algorithm}{Algorithms}
\crefname{algorithm}{Algorithm}{Algorithms}
\crefname{figure}{Figure}{Figure}
\crefname{section}{§}{§§}
\Crefname{section}{§}{§§}
\newacronym{KL}{kl}{Kullback-Leibler}
\newacronym{SGD}{sgd}{stochastic gradient descent}
\newacronym{OED}{BOED}{Bayesian optimal experimental design}
\newacronym{MI}{MI}{mutual information}
\newacronym{EIG}{EIG}{expected information gain}
\newacronym{NMC}{NMC}{nested Monte Carlo}
\newacronym{ACE}{ACE}{adaptive contrastive estimation}
\newacronym{NCE}{NCE}{noise contrastive estimation}
\newacronym{BA}{BA}{Barber-Agakov}
\newacronym{BO}{BO}{Bayesian optimization}
\newacronym{MC}{MC}{Monte Carlo}
\newacronym{RMSE}{RMSE}{root mean squared error}
\definecolor{Green}{RGB}{50,200,50}
\DeclareMathOperator{\E}{{}\mathbb{E}}
\DeclareMathOperator*{\argmax}{arg\,max} 
\renewcommand{\to}{\ensuremath{\rightarrow}}              
\newcommand{\entropy}[1]{H\!\left[#1\right]}
\newcommand{\iid}{\overset{\scriptstyle{\text{i.i.d.}}}{\sim}}
\newcommand{\ppx}[1]{\frac{\partial}{\partial #1}}
\newcommand{\pypx}[2]{\frac{\partial #1}{\partial #2}}
\begin{document}

\runningauthor{Adam Foster, Martin Jankowiak, Matthew O'Meara, Yee Whye Teh, Tom Rainforth}

\twocolumn[

\aistatstitle{A Unified Stochastic Gradient Approach to Designing Bayesian-Optimal Experiments}

\aistatsauthor{ Adam Foster\textsuperscript{\textdagger} \And \hspace{-10pt}Martin Jankowiak\textsuperscript{\ddag} \And Matthew O'Meara\textsuperscript{\textsection} \And Yee Whye Teh\textsuperscript{\textdagger} \And  \hspace{-10pt}Tom Rainforth\textsuperscript{\textdagger \maltese}}
\vspace{.5ex}

\aistatsaddress{ \textsuperscript{\textdagger}Department of Statistics, University of Oxford, Oxford, UK \\
	\textsuperscript{\ddag}Uber AI, San Francisco, CA, USA \\
	\textsuperscript{\textsection}University of Michigan, Ann Arbor, MI, USA\\
	\textsuperscript{\maltese}Christ Church, University of Oxford, Oxford, UK \\ \texttt{adam.foster@stats.ox.ac.uk}}
]

\begin{abstract}
We introduce a fully stochastic gradient based approach to \acrfull{OED}.
Our approach utilizes variational lower bounds on the \acrfull{EIG} of an experiment that can be simultaneously optimized with respect to both the variational and design parameters.
This allows the design process to be carried out through a single unified stochastic gradient ascent procedure, in contrast to existing approaches that typically construct a pointwise \acrshort{EIG} estimator, before passing this estimator to a separate optimizer.
We provide a number of different variational objectives including the novel \acrfull{ACE} bound.
Finally, we show that our gradient-based approaches are able to provide effective design optimization in substantially higher dimensional settings than existing approaches.
\end{abstract}

\glsresetall

\section{INTRODUCTION}
\label{sec:introduction}

The design of experiments is a key problem in almost every scientific discipline. 
Namely, one wishes to construct an experiment that is most informative about the investigated process, while minimizing its cost.
For example, in a psychological trial, we want to ensure questions posed to participants are pertinent and do not have predictable responses.
In a pharmaceutical trial, we want to minimize the number of participants needed to test our hypotheses.
In an online automated help system, we want to ensure we ask questions that identify the user's problem as quickly as possible.

In all these scenarios, our ultimate high-level aim is to choose designs that maximize the information gathered by the experiment.
A powerful and broadly used approach for formalizing this aim is~\acrfull{OED} \citep{chaloner1995,lindley1956,myung2013}.
In \acrshort{OED}, we specify a Bayesian model for the experiment and then choose the design that maximizes the \acrfull{EIG} from running it.
More specifically, let $\theta$ denote the latent variables we wish to learn about from running the experiment and let $\xi \in \Xi$ represent the experimental design. 
By introducing a prior $p(\theta)$ and a predictive distribution $p(y|\theta,\xi)$ for experiment outcomes $y$, we can calculate the \acrshort{EIG} under this model by taking the expected reduction in posterior entropy
\begin{equation}
I(\xi) \triangleq \E_{p(y|\xi)}\left[ H[p(\theta)] - H[p(\theta|y,\xi)] \right],
\end{equation}
where $H[\cdot]$ represents the entropy of a distribution and $p(\theta|y,\xi) \propto p(\theta)p(y|\theta,\xi)$.
Our experimental design process now becomes that of the finding the design $\xi^*$ that maximizes $I(\xi)$.

Unfortunately, finding $\xi^*$ is typically a very challenging problem in practice. 
Even evaluating $I(\xi)$ for a single design is computationally difficult because it represents a \emph{nested} expectation and thus has no direct \acrlong{MC} estimator~\citep{nmc,zheng2018robust}.
Though a large variety of approaches for performing this estimation have been suggested~\citep{myung2013,watson2017quest+,kleinegesse2018efficient,foster2019variational}, the resulting \acrshort{OED} strategies share a critical common feature: they estimate $I(\xi)$ on a point-by-point basis and feed this estimator to an outer-level optimizer that selects the design.

This framework can be highly inefficient for a number of reasons.
For example, it adds an extra level of nesting to the overall computation process: $I(\xi)$ must be separately estimated for each $\xi$, substantially increasing the overall computational cost.
Furthermore, one must typically resort to gradient-free methods to carry out the resulting optimization, which means it is difficult to scale the overall \acrshort{OED} process to high dimensional design settings due to a dearth of optimization schemes which remain effective in such settings.

To alleviate these inefficiencies and open the door to applying \acrshort{OED} in high-dimensional settings, we introduce an alternative to this two-stage framework by introducing unified objectives that can be directly maximized to simultaneously estimate $I(\xi)$ and optimize $\xi$.
Specifically, by building on the work of~\citet{foster2019variational}, we construct variational lower bounds to $I(\xi)$ that can be simultaneously optimized with respect to both the variational and design parameters.
Optimizing the former ensures that we achieve a tight bound that in turn gives accurate estimates of $I(\xi)$, while simultaneously optimizing the latter circumvents the need for an expensive outer optimization process.
Critically, this approach allows the optimization to be performed using stochastic gradient ascent (SGA)~\citep{robbins1951stochastic} and therefore scaled to substantially higher dimensional design problems than existing approaches.

To account for the varying needs of different problem settings, we introduce several classes of suitable variational lower bounds.
Most notably, we introduce the \acrfull{ACE} bound: an~\acrshort{EIG} variational lower bound that can be made arbitrarily tight, while remaining amenable to simultaneous SGA on both the variational parameters and designs.

We demonstrate\footnote{Supporting code is provided at \path{https://github.com/ae-foster/pyro/tree/sgboed-reproduce}.} the applicability of our unified gradient approach using a wide range of experimental design problems, including a real-world high-dimensional example from the pharmacology literature~\citep{lyu2019ultra}.
We find that our approaches are able to effectively optimize the \acrshort{EIG}, consistently outperforming baseline two-stage approaches, with particularly large gains achieved for high-dimensional problems. These gains lead, in turn, to improved designs and more informative experiments.



\section{BACKGROUND}
\label{sec:background}
\subsection{Bayesian optimal experimental design}

When experimentation is costly, time consuming, or dangerous, it is essential to design experiments to learn the most from them. To choose between potential designs, we require a metric of the quality of a candidate design. In the \acrshort{OED} framework dating back to \citet{lindley1956}, this metric represents how much more certain we will become in our knowledge of the world after doing the experiment and analyzing the data. We prefer designs that will lead to strong conclusions even if we are not yet sure what those conclusions will be.

Specifically, we consider an experiment with design $\xi$, latent variable $\theta$ and outcome $y$. For example, $\xi$ may represent the question posed to a participant in a psychology trial, $y$ their answer, and $\theta$ their underlying psychological characteristic which is being studied. The \acrshort{OED} framework begins with a Bayesian model of the experimental process. This model consists of a likelihood $p(y|\theta,\xi)$ that predicts the experimental outcome under design $\xi$ and latent variable $\theta$ and a prior $p(\theta)$ which incorporates initial beliefs about the unknown $\theta$. After conducting the experiment, our beliefs about $\theta$ are updated to the posterior $p(\theta|y,\xi)$. The information gained about $\theta$ from doing the experiment with design $\xi$ and obtaining outcome $y$ is the reduction in entropy from the prior to the posterior
\begin{equation}
	\text{IG}(y, \xi) = H[p(\theta)] - H[p(\theta|y,\xi)].
\end{equation}
As it stands, information gain cannot be evaluated until after the experiment. To define a metric that will let us choose between designs before experimentation, we can use the \textit{expected} information gain (EIG), $I(\xi)$, by taking the expectation of IG over hypothesized outcomes $y$ using the marginal distribution under our model, $p(y|\xi)$, to give
\begin{equation}
	I(\xi) \triangleq \E_{p(y|\xi)}\left[ H[p(\theta)] - H[p(\theta|y,\xi)] \right]
\end{equation}
which can be rewritten in the form of a mutual information between $\theta$ and $y$ with $\xi$ fixed, namely
\begin{equation}
    I(\xi) = \text{MI}_\xi(\theta;y) = \E_{p(\theta)p(y|\theta,\xi)}\left[\log \frac{p(y|\theta,\xi)}{p(y|\xi)} \right].
    \label{eq:eig}
\end{equation}
The Bayesian optimal design, $\xi^*$, is now the one which maximizes EIG over the set of feasible designs $\Xi$
\begin{align}
	\xi^* = \argmax_{\xi \in \Xi} \; I(\xi).
	\label{eq:optim_problem}
\end{align}
In \textit{iterated} experimental design, we design a sequence $\xi_1, ..., \xi_T$ of experiments. At time $t$, the prior $p(\theta)$ in \eqref{eq:eig} is replaced by the posterior given the previous experiment designs and observed outcomes,
namely
\begin{equation}
p(\theta|\xi_{1:t-1},y_{1:t-1}) \propto p(\theta)\prod_{\tau=1}^{t-1}p(y_\tau|\theta,\xi_\tau).
\label{eq:iterated_prior}
\end{equation}
This now allows us to construct adaptive  experiments, wherein we use information gathered from previous iterations to select the designs used at future iterations.

\subsection{Estimating expected information gain}
Making even a single point estimate of \acrshort{EIG} when solving \eqref{eq:optim_problem} can be challenging because we must first estimate the unknown $p(y|\xi)$ or $p(\theta|y,\xi)$, and then take an expectation over $p(\theta)p(y|\theta,\xi)$. Nested Monte Carlo (NMC) estimators \citep{nmc}, which make a Monte Carlo approximation of both the inner and outer integrals, converge relatively slowly: at a rate $\mathcal{O}(T^{-1/3})$ in the total computational budget $T$.

\citet{foster2019variational} noted that this approach is inefficient because it makes a separate \acrlong{MC} approximation of the integrand for every sample of the outer integral. To share information between different samples, they proposed a number of variational estimators that used amortization, i.e.~they attempted to learn the functional form of the integrand rather than approximating it afresh each time. One of their approaches was based on amortized variational inference and required an \textit{inference network} $q_\phi(\theta|y)$ which takes as input $\phi, y$ and outputs a distribution over $\theta$. 
For any $q_\phi(\theta|y)$, we can construct a lower bound on $I(\xi)$. This is the \acrfull{BA}, or posterior, lower bound \citep{ba}
\begin{equation}
{I}_{BA}(\xi,\phi) \triangleq \E_{p(\theta)p(y|\theta,\xi)}[\log q_\phi(\theta|y)] + \entropy{p(\theta)},
\label{eq:ba}
\end{equation}
which was also used by \citep{pacheco2019variational} and which has found use representation learning \citep{poole2018variational} and maximizing information transmission over noisy channels \citep{ba}. 

To make high-quality approximations to $I(\xi)$, and simultaneously learn a good posterior approximation, \citet{foster2019variational} maximize this bound with respect to $\phi$. This approach is most effective when the bound is tight, i.e.~$\max_\phi I_{BA}(\xi, \phi) = I(\xi)$. For $I_{BA}(\xi, \phi)$, this occurs when it is possible to have $q_{\phi}(\theta|y) = p(y|\theta,\xi)$, i.e.~when the inference network is powerful enough to find the true posterior distribution for every $y$.

To obtain high-quality approximations of $I(\xi)$ even when the inference network cannot capture the true posterior, \citet{foster2019variational} also considered another variational estimator: variational nested Monte Carlo (VNMC).
This uses the inference network $q_\phi(\theta|y)$ in conjunction with additional samples to improve the estimate of the integrand. They showed that this leads to the following \textit{upper} bound on $I(\xi)$
\begin{equation}
I_{VNMC}(\xi, \phi, L) \triangleq \E\left[\log \frac{p(y|\theta_0,\xi)}{\frac{1}{L}\sum_{\ell=1}^L \frac{p(\theta_\ell)p(y|\theta_\ell,\xi)}{q_\phi(\theta_\ell|y)}}\right],
\label{eq:vnmc}
\end{equation}
where the expectation is over $p(\theta_0)p(y|\theta_0,\xi)q_\phi(\theta_{1:L}|y)$. The inference network in VNMC is trained by minimization, in the same way $I_{BA}$ is trained by maximization. $I_{VNMC}$ has the attractive feature that the bound becomes tight as $L\to\infty$, even if $q_\phi(\theta_\ell|y)$ is not powerful enough to directly represent the true posterior.

\subsection{Optimizing the EIG}

The experimental design problem is to find the design that maximizes the EIG. Therefore, as well as finding a way to estimate EIG, existing approaches subsequently need to find a way of searching across $\Xi$ to find promising designs. At a high-level, most existing approaches propose a two-stage procedure in which noisy estimates of $I(\xi)$ are made, and a separate optimization procedure selects the candidate design $\xi$ to evaluate next. 

\citet{kleinegesse2018efficient} and \citet{foster2019variational} both use \acrfull{BO} for this outer optimization step, a black-box optimization method that is tolerant to noise in the estimates of the objective function~\citep{snoek2012practical}, in this case $I(\xi)$. Some approaches \citep{watson2017quest+,lyu2019ultra} instead select a finite number of candidate designs in $\Xi$ and estimate $I(\xi)$ at each candidate, with some refining this process further by adaptively allocating computational resources between these designs~\citep{vincent2017,rainforth2017thesis}.
Another suggested approach is to use MCMC methods to carry out this outer optimization~\citep{amzal2006bayesian,muller2005simulation}.



\section{GRADIENT-BASED BOED}
\label{sec:method}

Our central proposal is to replace the two-stage procedure outlined above with a single stage that simultaneously estimates $I(\xi)$ and optimizes $\xi$. 
This has the critical advantage of allowing SGA to be directly applied to the design optimization.
Not only does this provide substantial computational gains over approaches which must construct separate estimates for each design considered, but it also provides the potential to scale to substantially higher dimensional design problems than those which can be effectively tackled with existing approaches.
Since we take gradients with respect to $\xi$, we henceforth assume that $\Xi$ is continuous.

In our approach, we utilize variational \textit{lower bounds} on $I$. Specifically, suppose we have a bound $\mathcal{L}(\xi, \phi) \le I(\xi)$ with variational parameters $\phi$. For fixed $\xi$, the estimate of $I(\xi)$ improves as we maximize with respect to $\phi$. We propose to maximize $\mathcal{L}$ \textit{jointly} with respect to $(\xi, \phi)$. As we train $\phi$, the variational approximation improves; as we train $\xi$ our design moves to regions where the lower bound on \acrshort{EIG} is largest. 
By tackling this as a single optimization problem over $(\xi, \phi)$, we obviate the need to have an outer optimizer for $\xi$. Using a lower bound is important because it allows us to perform a single maximization over $(\xi, \phi)$, rather than a more complex optimization such as the max-min optimization that would result if we used an upper bound.

In practice, we do not have lower bounds on $I$ that we can evaluate and differentiate in closed form. Instead, we have bounds that are expectations over $p(\theta)p(y|\theta,\xi)$. Fortunately, we can still maximize these lower bounds with respect to $(\xi, \phi)$ by using SGA, which is known to remain effective in high dimensions~\citep{bottou2010large}. 

\subsection{\acrfull{BA}}
We now make our first concrete proposal for the lower bound $\mathcal{L}(\xi, \phi)$: the BA bound $I_{BA}$, as defined in \eqref{eq:ba}. The difference is we will now optimize $(\xi, \phi)$ jointly whereas previously only $\phi$ was trained using gradients. 
To perform SGA, we use the following unbiased estimators for $\partial I_{BA}/\partial \phi$ and $\partial I_{BA}/\partial \xi$
\begin{align}
	\widehat{\frac{\partial I_{BA}}{\partial \phi}} &= \frac{1}{N}\sum_{n=1}^N \ppx{\phi} \log q_\phi(\theta_n|y_n),\\
	\widehat{\frac{\partial I_{BA}}{\partial \xi}} &= \frac{1}{N} \sum_{n=1}^{N} \log q_\phi(\theta_n|y_n) \ppx{\xi}\log p(y_n|\theta_n,\xi)
\end{align}
where $\theta_n,y_n \iid p(\theta)p(y|\theta,\xi)$.  The estimator of $\partial I_{BA}/\partial \xi$ is a score function estimator, 
other possibilities are discussed in Section~\ref{sec:gradients}.

\subsection{Adaptive contrastive estimation (ACE)}
The BA bound provides one specific case of our one-stage procedure for optimal experimental design. We now introduce a new lower bound that improves upon $I_{BA}$. The potential issue with the BA bound is that it may not be sufficiently tight,
which happens when the inference network cannot represent the true posterior. One possible solution is to introduce additional samples, as in the VNMC estimator \eqref{eq:vnmc}. However, we cannot use VNMC directly for a one-stage procedure: since it is an upper bound, we must minimize it with respect to $\phi$, but we still wish to maximize with respect to $\xi$. 

Looking more closely at the VNMC bound, we see that its main failure case is when the denominator strongly \textit{under-estimates} $p(y|\xi)$, which can happen when all the inner samples $\theta_1, ..., \theta_L$ miss regions where the joint $p(\theta_\ell)p(y|\theta_\ell,\xi)$ is large. In addition to the samples $\theta_{1:L}$, we also have the original sample $\theta_0$ from which $y$ was sampled. One way to avoid the under-estimation in the denominator would be to include this sample, giving
\begin{equation}
	{I}_{ACE}(\xi, \phi, L) = \E\left[\log \frac{p(y|\theta_0,\xi)}{\frac{1}{L+1}\sum_{\ell=0}^L \frac{p(\theta_\ell)p(y|\theta_\ell,\xi)}{q_\phi(\theta_\ell|y)}}\right]
	\label{eq:ace}
\end{equation}
where the expectation is with respect to $p(\theta_0)p(y|\theta_0,\xi)q(\theta_{1:L}|y)$. In fact, by including $\theta_0$ we cause the denominator to now \textit{over-estimate} $p(y|\xi)$ which results in a new \textbf{lower bound} on $I(\xi)$ which can be jointly maximized with respect to $(\xi, \phi)$.
The samples $\theta_{1:L}$ can now be seen as contrasts to the original sample $\theta_0$. For this reason, we call $\theta_{1:L}$ \textit{contrastive samples} and we call \eqref{eq:ace} the \textbf{adaptive contrastive estimate (ACE)} of EIG.
The following theorem establishes that ${I}_{ACE}$ is a valid lower bound on the EIG which becomes tight as $L\to \infty$.
\begin{restatable}{theorem}{lemace}
\label{lemma:ace}
For any model $p(\theta)p(y|\theta,\xi)$ and inference network $q_\phi(\theta|y)$, we have the following:\vspace{-5pt}
\begin{enumerate}
	\item $I_{ACE}$ is a lower bound on $I(\xi)$ and we can characterize the error term as an expected KL divergence:
	\begin{align*}
	I&(\xi) - {I}_{ACE}(\xi,\phi,L) \\
	 &= \E_{p(y|\xi)}\left[ KL\left( P(\theta_{0:L}|y) \middle|\middle| \prod_\ell q_\phi(\theta_\ell|y) \right) \right] \ge 0, \\
	P&(\theta_{0:L}|y) = \frac{1}{L+1}\sum_{\ell=0}^L p(\theta_\ell|y,\xi)\prod_{k\ne\ell} q_\phi(\theta_k|y).
	\end{align*} 
	\item As $L\to\infty$, we recover the true \acrshort{EIG}:\newline
	$\lim_{L\to \infty}{I}_{ACE}(\xi,\phi,L) = I(\xi)$.
	\item The ACE bound is monotonically increasing in $L$: ${I}_{ACE}(\xi, \phi, L_2) \ge {I}_{ACE}(\xi,\phi,L_1)$ for $L_2 \ge L_1 \ge 0$.
	\item If the inference network equals the true posterior $q_\phi(\theta|y) = p(\theta|y,\xi)$, then ${I}_{ACE}(\xi,\phi,L) = I(\xi), \forall L$.
	  \end{enumerate}
\end{restatable}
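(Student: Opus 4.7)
The plan is to reduce all four parts to properties of the random variable $V_L := \frac{1}{L+1}\sum_{\ell=0}^L v_\ell$ where $v_\ell := p(\theta_\ell|y,\xi)/q_\phi(\theta_\ell|y)$. The key algebraic step is to substitute Bayes' rule $p(\theta_\ell)p(y|\theta_\ell,\xi) = p(y|\xi)\,p(\theta_\ell|y,\xi)$ into the denominator of \eqref{eq:ace}. This separates out a factor $p(y|\xi)$ whose logarithm combines with the numerator $\log p(y|\theta_0,\xi)$ to yield exactly $I(\xi)$, giving the compact identity $I(\xi) - I_{ACE}(\xi,\phi,L) = \E[\log V_L]$ where the expectation is under $p(\theta_0)p(y|\theta_0,\xi)q_\phi(\theta_{1:L}|y)$. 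Every item now reduces to a statement about $\E[\log V_L]$.

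For part~1, since $V_L$ is symmetric in $\theta_{0:L}$, I can replace the asymmetric sampling measure by the symmetric mixture $P(\theta_{0:L}|y)$ in the statement without changing the expectation. A direct calculation gives the factorisation $P(\theta_{0:L}|y) = \bigl(\prod_\ell q_\phi(\theta_\ell|y)\bigr) V_L$, so $\log V_L = \log\bigl(P / \prod_\ell q_\phi\bigr)$ and the expected-KL representation follows, with non-negativity by Gibbs. Part~2 is then a squeeze: Jensen gives $0 \le \E[\log V_L] \le \log \E[V_L]$, and one computes $\E[V_L\mid y] = (\E_p[p/q_\phi] + L)/(L+1) \to 1$ as $L\to\infty$ (the $\theta_0$ contribution is $O(1/L)$ and the remainder is an i.i.d.\ average with mean~$1$). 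Part~4 is immediate because $q_\phi(\theta|y) = p(\theta|y,\xi)$ forces every $v_\ell \equiv 1$ and hence $V_L \equiv 1$.

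Part~3 (monotonicity) is the step I expect to be the main obstacle, because the standard IWAE-style subset-averaging trick does not apply here: the privileged sample $\theta_0 \sim p(\theta|y,\xi)$ breaks the exchangeability that normally lets a random $L_1$-subset of $L_2$ samples furnish an unbiased estimator of $V_{L_2}$. My plan is instead to use the one-step recursion $V_{L+1} = \tfrac{L+1}{L+2}V_L + \tfrac{1}{L+2}v_{L+1}$, where $v_{L+1}$ is conditionally independent of $V_L$ given $y$ with $\E[v_{L+1}\mid y]=1$. Conditioning on $(V_L, y)$ and applying Jensen gives $\E[\log V_{L+1}\mid V_L, y] \le \log\bigl(\tfrac{L+1}{L+2}V_L + \tfrac{1}{L+2}\bigr)$; subtracting $\log V_L$ and applying Jensen a second time reduces the desired inequality $\E[\log V_{L+1}] \le \E[\log V_L]$ to the bound $\E[1/V_L\mid y] \le 1$. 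I will then establish the sharper identity $\E[1/V_L\mid y] = 1$ for every $L$ by reusing the factorisation from part~1: under the symmetric density $P(\theta_{0:L}|y)$, the integrand $P \cdot (1/V_L)$ collapses to $\prod_\ell q_\phi(\theta_\ell|y)$, which integrates to one. This identity, together with the two Jensen steps, gives $\E[\log V_{L+1}] \le \E[\log V_L]$ and hence monotonicity of $I_{ACE}$ in $L$.
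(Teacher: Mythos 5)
Your proposal is correct, and it splits cleanly into two halves relative to the paper. Parts 1 and 4 are essentially the paper's own argument: the paper proves part 1 by exactly your permutation-symmetry step together with the factorisation $P(\theta_{0:L}|y) = V_L \prod_{\ell} q_\phi(\theta_\ell|y)$, and part 4 by the same Bayes observation (every $v_\ell \equiv 1$). Parts 2 and 3, however, take genuinely different routes. For part 3 the paper stays inside the KL framework: it writes $I_{ACE}(\xi,\phi,L_2) - I_{ACE}(\xi,\phi,L_1)$ as an expected log-ratio, symmetrises over the labels $0,\dots,L_1$ only, and recognises the result as an expected KL divergence between the partial mixture $Q(\theta_{0:L_2}|y)$ and the full mixture, whence non-negativity; this label-symmetrisation is precisely the paper's repair of the broken IWAE subset-averaging trick that you correctly identify as failing here because of the privileged sample $\theta_0$. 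Your alternative---the recursion $V_{L+1} = \tfrac{L+1}{L+2}V_L + \tfrac{1}{L+2}v_{L+1}$, two applications of Jensen, and the exact identity $\E[1/V_L \mid y] = 1$---is valid: the identity follows from your part-1 factorisation just as you say (it is the unbiasedness of the inverse normalising-constant estimator under conditional importance sampling), and it even hands you the non-negativity in part 1 for free via $\E[\log V_L] \ge -\log \E[1/V_L \mid y] = 0$, so your argument is arguably more elementary and self-contained than the paper's. For part 2, the paper adds the technical assumption that $p(\theta)p(y|\theta,\xi)/q_\phi(\theta|y)$ is bounded and argues via the strong law of large numbers plus bounded convergence; your squeeze $0 \le \E[\log V_L] \le \log\E[V_L] = \log\tfrac{C_\xi + L}{L+1}$ instead requires the moment condition $C_\xi \triangleq \E_{p(\theta)p(y|\theta,\xi)}\left[ p(\theta|y,\xi)/q_\phi(\theta|y) \right] < \infty$, and you should state this explicitly: neither assumption implies the other in general (bounded weights do not force $C_\xi < \infty$ when $y$ has unbounded support, and finite $C_\xi$ does not force bounded weights). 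What your route buys in exchange is a quantitative rate, $I(\xi) - I_{ACE}(\xi,\phi,L) \le (C_\xi - 1)/(L+1)$ via $\log(1+x)\le x$, which is exactly the argument the paper uses for its Theorem~\ref{thm:conv} in Appendix~\ref{app:method}; the paper's SLLN route, by contrast, covers cases with bounded weights but infinite $C_\xi$.
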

See Appendix~\ref{app:method} for the proof and additional results.
Gradient estimation for ACE is discussed in Section~\ref{sec:gradients}. 
We note that, to the best of our knowledge, $I_{ACE}$ has not previously appeared in the \acrshort{OED} literature.\footnote{Aside from a recent blog post~\citep{sobolev2019thoughts} we believe this bound has not previously been suggested in any context.}

\subsection{Prior contrastive estimation (PCE)}
Theorem~\ref{lemma:ace} tells us that $I_{ACE}$ can become close to $I(\xi)$ if either: 1) the inference network becomes close to the true posterior $p(\theta|y,\xi)$, 2) we increase the number of contrastive samples $L$. The BA bound only becomes tight in case 1). A special case of ACE is to replace the inference network $q_\phi(\theta|y)$ with a fixed distribution and rely on the contrastive samples to make good estimates of $I(\xi)$, only becoming tight in case 2), i.e.~as $L\to\infty$. This simplification can speed up training, since we no longer need to learn additional parameters $\phi$.

To explore this, we propose the \textbf{prior contrastive estimation (PCE)} bound, in which the prior $p(\theta)$ is used to generate contrastive samples:
\begin{equation}
{I}_{PCE}(\xi, L) \triangleq \E \left[\log \frac{p(y|\theta_0,\xi)}{\frac{1}{L+1}\sum_{\ell=0}^{L} p(y|\theta_\ell,\xi)}\right],
\label{eq:nce_likelihood}
\end{equation}
where the expectation is over $p(\theta_0)p(y|\theta_0,\xi)p(\theta_{1:L})$.
Whilst inherently less powerful than ACE, PCE can be effective when the prior and posterior are similar, such that $p(\theta)$ is a suitable proposal to estimate $p(y|\xi)$. 

Though, to the best of our knowledge, this bound has not been applied to \acrshort{OED} before, we note that it shares a connection to the information noise contrastive estimation (InfoNCE) bound on mutual information used in representation learning \citep{oord2018representation}. Given $K$ data samples $x_k$, corresponding representations $z_k$, and a critic $f_\psi(x, z) \ge 0$, we have
\begin{equation}
	\text{MI}(x; z) \ge \E\left[\frac{1}{K}\sum_{k=1}^K \log\frac{f_\psi(x_k, z_k)}{\frac{1}{K}\sum_{\ell=1}^K f_\psi(x_\ell, z_k)}	\right]
\end{equation}
where the expectation is over $p(x)p(z|x)$, $p(x)$ is the data distribution, and $p(z|x)$ is the encoder. 
\citet{poole2018variational} showed that the encoder density $p(z|x)$ is the optimal critic, although it is rarely known in closed form in the representation learning context. Writing $\theta$ for $x$ and $y$ for $z$, we note the mathematical connection between this optimal case and $I_{PCE}$. 

\subsection{Likelihood-free ACE}
In some models such as random effects models, the likelihood $p(y|\theta,\xi)$ is not known in closed form but can be sampled from. This presents a problem when computing $I_{ACE}$ or its derivatives because the likelihood appears in \eqref{eq:ace}. To allow ACE to be used for these kinds of models, we now show that using a unnormalized approximation to the likelihood still results in a valid lower bound on the \acrshort{EIG}. In fact, if using a parametrized likelihood approximation $f_\psi$, it is then possible to train $\psi$ jointly with $(\xi, \phi)$ to approximate the likelihood, learn an inference network, and find the optimal design through the solution to a single optimization problem. The following theorem, whose proof is presented in Appendix~\ref{app:method}, shows that replacing the likelihood with an unnormalized approximation does result in a valid lower bound on \acrshort{EIG}.
\begin{restatable}{theorem}{lemlf}
\label{lemma:lf}
Consider a model $p(\theta)p(y|\theta,\xi)$ and inference network $q_\phi(\theta|y)$. Let $f_\psi(\theta,y) \ge 0$ be an unnormalized likelihood approximation. Then,
\begin{equation}
	I(\xi) \ge \E\left[\log \frac{f_\psi(\theta_0,y)}{\frac{1}{L+1}\sum_{\ell=0}^L \frac{p(\theta_\ell)f_\psi(\theta_\ell,y)}{q_\phi(\theta_\ell|y)}} \right]
	\label{eq:ace_lf}
\end{equation}
where the expectation is over $p(\theta_0)p(y|\theta_0,\xi)q_\phi(\theta_{1:L}|y)$.
\end{restatable}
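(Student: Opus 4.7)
The plan is to realise the gap between $I(\xi)$ and the right-hand side of \eqref{eq:ace_lf} as a Kullback--Leibler divergence between two explicit distributions on the augmented space of $(\theta_{0:L}, y)$, and then invoke non-negativity of KL. Let $P(\theta_{0:L},y) := p(\theta_0)p(y|\theta_0,\xi)q_\phi(\theta_{1:L}|y)$ denote the proposal under which the expectation in \eqref{eq:ace_lf} is taken, and write $w_{f,\ell} := p(\theta_\ell)f_\psi(\theta_\ell,y)/q_\phi(\theta_\ell|y)$ together with $\bar w_f := \tfrac{1}{L+1}\sum_{\ell=0}^L w_{f,\ell}$.

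First I would use $I(\xi) = \E_P[\log p(y|\theta_0,\xi)/p(y|\xi)]$, which holds because marginalising $P$ over $\theta_{1:L}$ returns $p(\theta_0,y|\xi)$. Combining with \eqref{eq:ace_lf} gives
\[
I(\xi) - \text{RHS} \;=\; \E_P\!\left[\log \frac{p(y|\theta_0,\xi)\,\bar w_f}{p(y|\xi)\,f_\psi(\theta_0,y)}\right],
\]
and this is exactly $\DKL(P\|Q)$ for the candidate $Q(\theta_{0:L},y) = P(\theta_{0:L},y)\cdot \frac{p(y|\xi)\, f_\psi(\theta_0,y)}{p(y|\theta_0,\xi)\,\bar w_f}$. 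Crucially the intractable likelihood $p(y|\theta_0,\xi)$ cancels, and using $p(\theta_0)f_\psi(\theta_0,y) = q_\phi(\theta_0|y)\, w_{f,0}$ collapses $Q$ to the tidy form $Q(\theta_{0:L},y) = p(y|\xi)\, q_\phi(\theta_{0:L}|y)\, w_{f,0}/\bar w_f$.

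The main obstacle will be verifying that $Q$ is an honest probability distribution. Non-negativity is immediate from $f_\psi \ge 0$. For normalisation I would integrate first over $\theta_{0:L}$ with $y$ held fixed: under the i.i.d.\ product $q_\phi(\theta_{0:L}|y)$ the weights $(w_{f,\ell})_{\ell=0}^L$ are exchangeable, and by definition $\sum_{\ell=0}^L w_{f,\ell}/\bar w_f \equiv L+1$ pointwise, so exchangeability forces $\E_{q_\phi}[w_{f,0}/\bar w_f] = \tfrac{1}{L+1}\sum_\ell \E_{q_\phi}[w_{f,\ell}/\bar w_f] = 1$. Hence $\int Q\,d\theta_{0:L} = p(y|\xi)$, and a further integration in $y$ returns $1$. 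Applying $\DKL(P\|Q)\ge 0$ then delivers $I(\xi) \ge \text{RHS}$; the argument is a likelihood-free mirror of the proof of \cref{lemma:ace}, with the cancellation of $p(y|\theta_0,\xi)$ between $P$ and the log-ratio being precisely what permits $f_\psi$ to replace the true likelihood.
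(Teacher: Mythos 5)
Your proof is correct, and it takes a genuinely different route from the paper's, though the two are dual to one another. The paper argues in three steps: it first uses the chain rule for mutual information, together with the conditional independence of $\theta_{1:L}$ and $\theta_0$ given $y$, to identify $I(\xi)$ with $\text{MI}(\theta_0;(y,\theta_{1:L}))$; it then invokes the Donsker--Varadhan representation with the integrand of \eqref{eq:ace_lf} as the critic $T$; and finally it computes the DV normalizer $Z=\E_{p(\theta_0)p(y|\xi)q_\phi(\theta_{1:L}|y)}\left[e^{T}\right]=1$ by absorbing $p(\theta_0)f_\psi(\theta_0,y)=q_\phi(\theta_0|y)\,w_{f,0}$ into the proposal and appealing to symmetry, where $w_{f,\ell}=p(\theta_\ell)f_\psi(\theta_\ell,y)/q_\phi(\theta_\ell|y)$ and $\bar w_f$ is their average. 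You instead exhibit the slack directly as $\DKL(P\ \|\ Q)$ for the explicit distribution $Q(\theta_{0:L},y)=p(y|\xi)\,q_\phi(\theta_{0:L}|y)\,w_{f,0}/\bar w_f$; note that this $Q$ is exactly the DV tilted distribution $p(a)p(b)e^{T}$ (with $Z=1$), so your normalization check $\E_{q_\phi(\theta_{0:L}|y)}\left[w_{f,0}/\bar w_f\right]=1$ via exchangeability coincides precisely with the paper's symmetry computation of $Z$. The substantive differences are that you bypass both external ingredients --- the chain-rule step is replaced by the elementary observation that $(\theta_0,y)$ has marginal $p(\theta_0)p(y|\theta_0,\xi)$ under $P$, and the DV representation is replaced by bare non-negativity of KL --- and in exchange you obtain strictly more information: an exact characterization of the gap $I(\xi)-\text{RHS}=\DKL(P\ \|\ Q)$, mirroring part 1 of \cref{lemma:ace} in the likelihood-free setting and identifying exactly when the bound is tight (namely $P=Q$ almost everywhere), whereas the paper's DV argument only delivers the inequality. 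Both treatments share the same implicit regularity caveats: the ratio $w_{f,0}/\bar w_f$ must be interpreted on the event $\bar w_f=0$ when $f_\psi$ is not strictly positive (where the bound's integrand is itself degenerate), and the relevant expectations must exist; since the paper glosses over these identically, your argument is no less rigorous than the published one.
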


\subsection{Iterated experimental design with ACE}
In iterated experimental design, we replace $p(\theta)$ by $p(\theta|y_{1:t-1},\xi_{1:t-1})$ as per \eqref{eq:iterated_prior}. We can sample $p(\theta|y_{1:t-1},\xi_{1:t-1})$ by performing inference. Whilst variational inference also provides a closed form estimate of the posterior density, some other inference methods do not. This is problematic because the prior density appears in \eqref{eq:ace}. Fortunately, it is sufficient to know the density \textit{up to proportionality} \citep{foster2019variational}. Indeed if $p(\theta) = A \cdot \gamma(\theta)$ where $A$ does not depend on $(\xi, \phi, y)$ and $\gamma$ is an unnormalized density, then 
\begin{equation}
	{I}(\xi) \ge \E\left[\log \frac{p(y|\theta_0,\xi)}{\frac{1}{L+1}\sum_{\ell=0}^L \frac{\gamma(\theta_\ell)p(y|\theta_\ell,\xi)}{q_\phi(\theta_\ell|y)}}\right] - \log A
\end{equation}
and the derivatives of $\log A$ are simply zero.

\subsection{Gradient estimation for ACE}
\label{sec:gradients}
To optimize the ACE bound with respect to $(\xi, \phi)$ we need unbiased gradient estimators of $\partial I_{ACE}/\partial \xi$ and $\partial I_{ACE}/\partial \phi$.
The simplest form of the $\xi$-gradient is
\begin{equation}
	\pypx{{I}_{ACE}}{\xi} = \E\left[ \pypx{g}{\xi} + g \cdot \ppx{\xi}\log p(y|\theta_0,\xi) \right]
	\label{eq:reinforce}
\end{equation}
where the expectation is with respect to $p(\theta_0)p(y|\theta,\xi)q(\theta_{1:L}|y)$, and
\begin{equation}
	g(y, \theta_{0:L}, \phi, \xi) = \log \frac{p(y|\theta_0,\xi)}{\frac{1}{L+1}\sum_{\ell=0}^L \frac{p(\theta_\ell)p(y|\theta_\ell,\xi)}{q_\phi(\theta_\ell|y)}}.
\end{equation}
Estimating the expectation \eqref{eq:reinforce} directly using Monte Carlo gives the score function, or REINFORCE, estimator. 
Unfortunately, this is often high variance, and reducing gradient estimator variance is often important in solving challenging experimental design problems.

One variance reduction method is reparameterization. 
For this, we introduce random variables $\epsilon, \epsilon'_{1:L}$ which do not depend on $(\xi, \phi)$ along with representations of $y$ and $\theta$ as deterministic functions of these variables: $y = y(\theta_0, \xi, \epsilon)$ and $\theta_\ell = \theta(y, \phi, \epsilon'_\ell)$. This now permits the reparameterized gradient
\begin{equation}
	\pypx{{I}_{ACE}}{\xi} = \E\left[ \pypx{g}{\xi} + \pypx{g}{y}\pypx{y}{\xi} + \sum_{\ell=1}^L \pypx{g}{\theta_\ell}\pypx{\theta_\ell}{y}\pypx{y}{\xi} \right]
\end{equation}
where the expectation is over $p(\theta_0)p(\epsilon)p(\epsilon'_{1:L})$. A Monte Carlo approximation of this expectation is typically a much lower variance estimator for the true $\xi$-gradient.

Alternatively, if $y$ is a discrete random variable we can sum over the possible values $\mathcal{Y}$. This approach is known as Rao-Blackwellization and gives
\begin{equation}
	\pypx{{I}_{ACE}}{\xi} = \sum_{y\in\mathcal{Y}}\E\left[ \pypx{g}{\xi}\,  p(y|\theta_0,\xi) + g \, \ppx{\xi}p(y|\theta_0,\xi) \right]
	\label{eq:raoblackwellise}
\end{equation}
where the expectation is now over $p(\theta_0)\prod_{\ell=1}^L q_\phi(\theta_\ell|y)$.

Turning to $\partial I_{ACE}/\partial \phi$, we note that if $\theta_{1:L}$ are reparameterizable (i.e. can be expressed $\theta_\ell = \theta(y, \phi, \epsilon'_\ell)$), then we can utilize the double reparameterization of \citet{tucker2018doubly}; for full details see Appendix~\ref{app:drep}.



\section{EXPERIMENTS}
\label{sec:experiments}

\begin{figure}[t]
	\centering
	\vspace{-8pt}
	\begin{subfigure}[b]{0.30\textwidth}
		\centering
		\includegraphics[width=\textwidth]{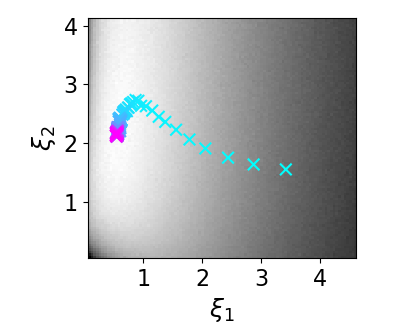}
	\end{subfigure}
	\vspace{-8pt}
    \caption{A sample trajectory for the death process. The grayscale shows the EIG surface (white is maximal), whilst crosses show the optimization trajectory of $\xi$ using ACE with pink representing later steps. See Sec.~\ref{sec:deathprocess} for details.
    \vspace{-10pt}}
	\label{fig:death_trajectory}
\end{figure}

We now learn optimal experimental designs in five scenarios: the \textbf{death process}, a well known two-dimensional design problem from epidemiology; a non-conjugate \textbf{regression} model with a 400-dimensional design; an ablation study in the setting of \textbf{advertising}; a real-world \textbf{biomolecular docking} problem from pharmacology  in 100 dimensions; and a \textbf{constant elasticity of substitution} iterated design problem in behavioural economics with 6 dimensional designs.

\subsection{Evaluating experimental designs}
We first discuss which metrics we will use to judge the quality of the designs we obtain. Our primary metric on designs is, of course, the \acrshort{EIG}. We prefer designs with high \acrshort{EIG}s. In some cases, we can evaluate the \acrshort{EIG} analytically. In other cases, we can use a sufficiently large number of samples in a \acrshort{NMC}~\citep{nmc} estimator to be sure that we have estimates that are sufficiently accurate to compare designs.

To explore the limits of our methods, we will also consider scenarios where neither of these approaches is suitable. In these cases, we pair the ACE lower bound (with $\xi$ fixed for evaluation) with the VNMC upper bound \citep{foster2019variational} to trap the true \acrshort{EIG} value---if the lower bound of one design is higher than the upper bound for another, we can be sure that the first design is superior (noting that the bounds themselves can be tractably estimated to a very high accuracy).

In some settings, when we know the true optimal design $\xi^*$, we will also consider the \textit{design error} $\|\xi^* - \xi\|$, i.e. how close our design is to the optimal design.

In iterated experiment design, as well as designing experiments, we must also perform inference on the latent variable $\theta$ after each iteration. Here, we also investigate the quality of the final posterior. Specifically, if $p(\theta|y_{1:t},\xi_{1:t})$ is the posterior after $t$ experiments, 
we use the posterior entropy, and the posterior RMSE $\E_{\theta \sim p(\theta|y_{1:t},\xi_{1:t})}\left[ (\theta - \theta^*)^2 \right]^{1/2}$.
We prefer low entropies and low RMSE values.

\subsection{Death process}
\label{sec:deathprocess}

\begin{figure}[t]
	\centering
	\vspace{-8pt}
	\begin{subfigure}[b]{0.34\textwidth}
		\centering
		\includegraphics[width=\textwidth]{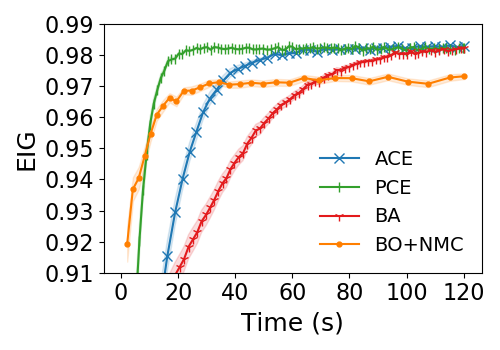}
	\end{subfigure}
	\vspace{-6pt}
	\caption{Optimization of EIG for the death process as a function of wall clock time. 
    We depict the mean and $\pm1$ standard error (s.e.) from 100 runs. The final EIG values (rightmost points) are as follows: [ACE] $\mathbf{0.9830 \pm 0.0001}$, [PCE] $0.9822\pm 0.0001$, [BA] $0.9822 \pm 0.0002$, [BO] $0.9732 \pm 0.0009$. See Sec.~\ref{sec:deathprocess} for details.}
	\vspace{-10pt}
	\label{fig:death_rb_bo}
\end{figure}

We consider an example from epidemiology, the death process \citep{cook2008optimal,kleinegesse2018efficient}, in which a population of $N=10$ individuals transitions from healthy to infected states at a constant but unknown rate $\theta$. We can measure the number of infected individuals at two different times $\xi_1$ and $\xi_1 + \xi_2$ where $\xi_1,\xi_2 \ge0$. Our aim is to infer the infection rate $\theta$ from these observations.
For full details of the prior and likelihood used, see Appendix~\ref{app:death}. 

On this problem, we apply gradient methods with Rao-Blackwellization over the 66 possible outcomes. 
Figure~\ref{fig:death_trajectory} shows a sample optimization trajectory with the approximate EIG surface. We compare against \acrshort{BO} using the Rao-Blackwellized NMC estimator of \citet{vincent2017}. Figure~\ref{fig:death_rb_bo} shows that, for the allowed time budget, all gradient methods perform better than BO even on this two-dimensional problem.

\subsection{Regression}
\label{sec:regression}

We now compare our one-stage gradient approaches to experimental design against a two-stage baseline on a high-dimensional design problem. We choose a general purpose Bayesian linear regression model with $n$ observations and $p$ features. The design $\xi$ is an $n \times p$ matrix; the latent variables are $\theta = (\mathbf{w}, \sigma)$, where $\mathbf{w}$ is the $p$ dimensional regression coefficient and $\sigma^2$ is the scalar variance. The $n$ outcomes are generated using a Normal likelihood $y_i \sim N(\bm{\xi}_i \cdot \mathbf{w}, \sigma)$ for $i=1,...,n$. Here $\bm{\xi}_i$ is the $i$th row of $\xi$. To avoid trivial solutions, we enforce the constraint $\|\bm{\xi}_i\|_1=1$ for all $i$. We use independent priors $w_j \sim \text{Laplace}(1)$ for $j=1,...,p$ and $\sigma \sim \text{Exp}(1)$. See 
Appendix~\ref{app:regression} for complete details.

We set $n=p=20$ and applied five methods to this 400 dimensional design problem: BA, ACE and PCE, as well as the VNMC estimator of \citet{foster2019variational}, with both \acrshort{BO} and random search to optimize over $\Xi$. The results are presented in Table~\ref{tab:regression_eig}. We note that the gradient methods strongly outperform the gradient-free baselines, with about double the final EIG. 

\begin{table}[t]
	\vspace{-6pt}
	\begin{center}
        \caption{Regression results. We estimate lower and upper bounds on the final EIG and present the mean and $\pm1$ s.e. from 10 runs. See Sec.~\ref{sec:regression} for details.  \vspace{-5pt}}
		\label{tab:regression_eig}
		{\renewcommand{\arraystretch}{1}
			\setlength\tabcolsep{4pt} 
			\begin{tabular}{lrr}
			\hline

				 \small Method & \small EIG l.b. & \small EIG u.b.  \\
				\hline
			    \small ACE & \small $16.1\pm 0.1 $ & \small $20.7 \pm 0.2$ \\
				 \small PCE & \small $16.6\pm 0.1$ & \small $21.5\pm 0.2$ \\
				 \small BA & \small $16.4 \pm 0.2$ & \small $21.1\pm 0.2$ \\
				\hline
				\small BO + VNMC & \small $7.3 \pm 0.1$ & \small $9.6 \pm 0.1$ \\
				\small Random Search + VNMC & \small $7.1 \pm 0.1$ & \small $9.4 \pm 0.1$ \\
				\hline
			\end{tabular}}
		\end{center}
		\vspace{-6pt}
	\end{table}

\subsection{Advertising}
\label{sec:largedim}

We now conduct a detailed ablation study on the effects of dimension on the quality of experimental designs produced using our gradient approaches and \acrshort{BO}. To further isolate the distinction between one-stage and two-stage approaches to \acrshort{OED}, we choose a setting in which we can compute $I(\xi)$ analytically. We give \acrshort{BO}, but not the gradient methods, access to a EIG oracle when making point evaluations of $I(\xi)$, i.e.~our two-stage baseline is spared the need to estimate $I(\xi)$. 
Thus we put \acrshort{BO} in the best possible position and ensure any gains are due to improvements from using gradient-based optimization.

Suppose that we are given an advertising budget of $B$ dollars that we need to allocate
among $D$ regions, i.e.~we choose $\bm{\xi} \ge \bm{0} $ with $\sum_{i=1}^D \xi_i=B$.
After conducting an ad campaign, we observe a vector of sales $\bm{y}$. We use this data to make inferences about the underlying market opportunities $\bm{\theta}$ in each region. Our prior incorporates the knowledge that neighbouring regions are more correlated than distant ones---this leads to an interesting experimental design problem because information can be pooled between regions. We can also compute the true EIG and optimal design $\xi^*$ analytically. For full details, see Appendix \ref{app:largedim}.

We compare the performance of four estimation and optimization methods on this problem, see Fig.~\ref{fig:varydim} for the results.  
The three gradient-based methods (ACE, PCE, BA) perform best, with the BO baseline struggling in dimensions $D \ge 6$, 
even though the latter has access to an EIG oracle.
PCE performed well in low dimensions, but degraded as the dimension increases and sampling from the prior becomes increasingly inefficient,
ACE and BA avoid this by learning adaptive proposal distributions.
 We note that since in this case the family of variational distributions used in ACE and BA include the true posterior, both methods yield similar performance. 

\begin{figure}[t]
	
	\centering
	\begin{subfigure}[b]{0.42\textwidth}
		\centering
		\vspace{-4pt}
		\includegraphics[width=\textwidth]{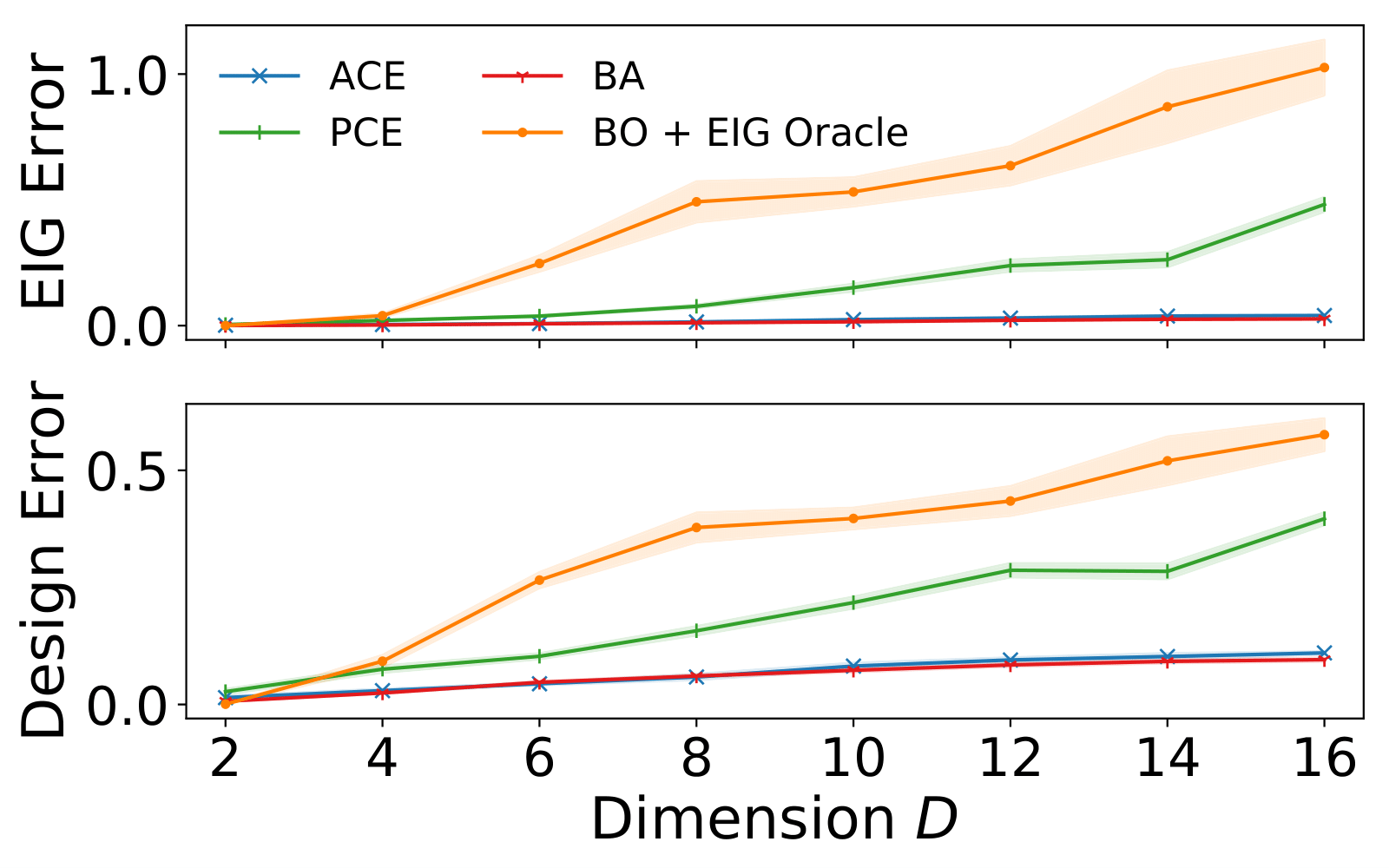}
	\end{subfigure}
	\vspace{-1pt}
	\caption{Mean absolute EIG and design errors for the marketing model in Sec.~\ref{sec:largedim} 
		averaged over 10 runs. The EIG is normalized such that an EIG error of unity corresponds to doing no better than a uniform budget, i.e. $\xi_i = B/D$ for $i=1,...,D$.
	\vspace{-10pt}}
	\label{fig:varydim}
\end{figure}

\subsection{Biomolecular docking}

\begin{figure*}
	\centering
	\vspace{-6pt}
	\begin{subfigure}[b]{0.76\textwidth}
		\centering
		\includegraphics[width=\textwidth]{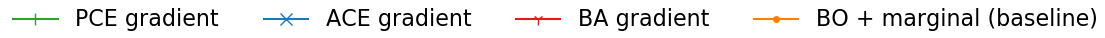}
	\end{subfigure}
	\hfill
	\begin{subfigure}[b]{0.24\textwidth}
		\centering
		\includegraphics[width=\textwidth]{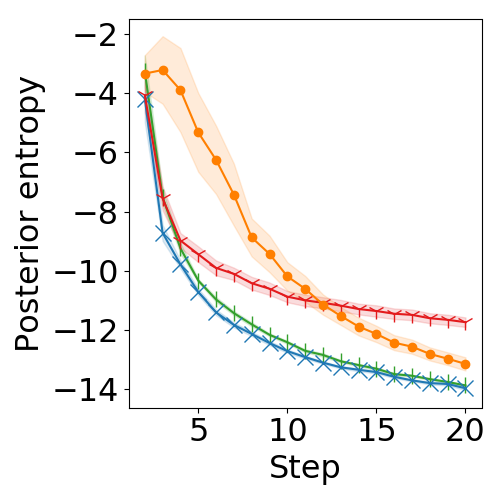}
		\caption{Total Entropy\label{fig:cesentropy}}
	\end{subfigure}
	\begin{subfigure}[b]{0.24\textwidth}
		\centering
		\includegraphics[width=\textwidth]{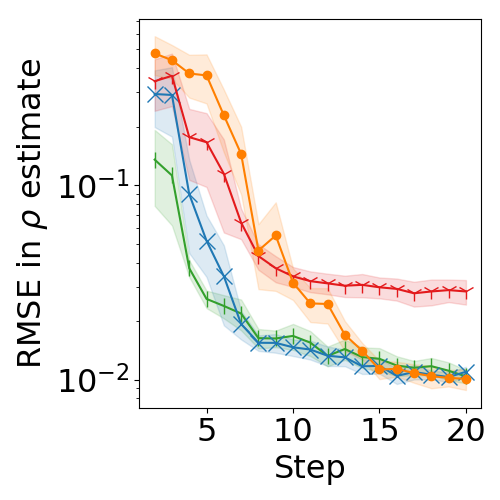}
		\caption{Posterior RMSE of $\rho$\label{fig:cesrho}}
	\end{subfigure}
	\begin{subfigure}[b]{0.24\textwidth}
		\centering
		\includegraphics[width=\textwidth]{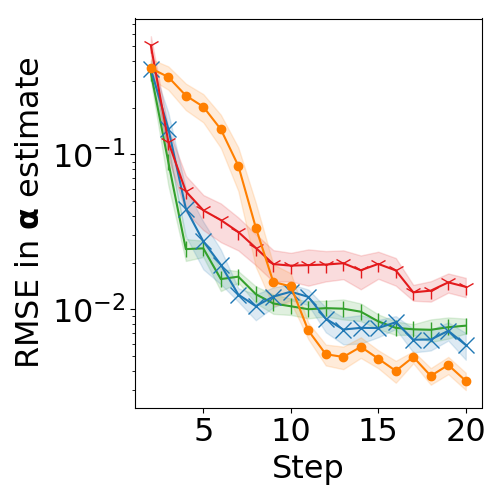}
		\caption{Posterior RMSE of $\bm{\alpha}$ \label{fig:cesalpha}}
	\end{subfigure}
	\begin{subfigure}[b]{0.24\textwidth}
		\centering
		\includegraphics[width=\textwidth]{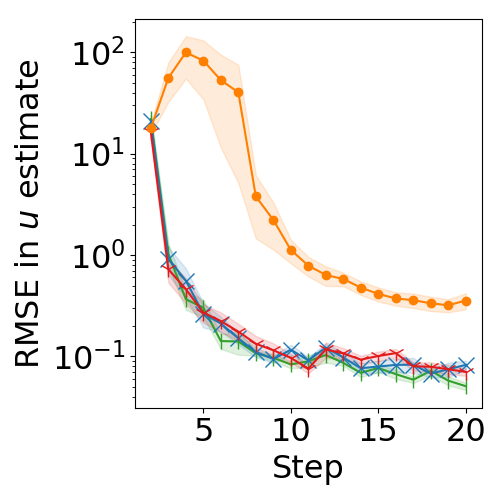}
		\caption{Posterior RMSE of $u$ \label{fig:cesu}}
	\end{subfigure}
	\vspace{-4pt}
    \caption{Improvement in the posterior in the sequential CES experiment. Each step took 120 seconds for each method. 
    	We present the mean and $\pm1$ standard error from 10 runs. See Sec.~\ref{sec:ces} for details.
    \vspace{-10pt}}
	\label{fig:ces}
\end{figure*}

We now consider an experimental design problem of interest to the pharmacology community. Having demonstrated that our one-stage gradient methods compare favourably with two stage approaches, we now compare against designs crafted by domain experts.

In molecular docking, computational techniques are used to predict the binding affinity between a compound and a receptor. When synthesized in the lab, the two may bind---this is called a \textit{hit}. Learning a well-calibrated hit-rate model can guide how many compounds to evaluate for additional objectives, such as drug-likeness or toxicity, before experimental testing.
\citet{lyu2019ultra} modelled the probability of outcome $y_i$ being a hit, given the predicted binding affinity, or {docking score}  $\xi_i \in [-75,0]$, as
\begin{equation}
	p(y_i=1|\theta,\xi) = \text{bottom} + \frac{\text{top} - \text{bottom}}{1 + e^{-(\xi_i - \text{ee50}) \times \text{slope}}}
\end{equation}
where $\theta=(\text{top},\,\text{bottom},\,\text{ee50},\,\text{slope})$ with priors given in Appendix~\ref{app:docking}.

Of 150 million compounds, \citet{lyu2019ultra} selected a batch of compounds to experimentally test to best fit the sigmoid hit-rate model. They considered 6 candidate designs and selected one that maximized the EIG estimated by NMC. Here, we instead apply gradient-based BOED to search across candidate designs which consist of 100 docking scores $\xi_1, ..., \xi_{100}$. To evaluate our final designs, we present upper and lower bounds on the final EIG: see Table~\ref{tab:docking_eig}. We see that all gradient methods are able to outperform experts in terms of \acrshort{EIG}, and that ACE appears the best of the gradient methods. Figure~\ref{fig:docking_histogram} shows our designs are qualitatively different to those produced by experts.

\begin{table}[t]
	\begin{center}
		\caption{Biomolecular docking results showing 
			the mean and $\pm1$ s.e. from 10 runs. For the expert, we took the best design of \citet{lyu2019ultra} appropriately rescaled to consist of 100 docking scores for comparison.  \vspace{-5pt}}
		\label{tab:docking_eig}
		{\renewcommand{\arraystretch}{1}
			\setlength\tabcolsep{4pt} 
			\begin{tabular}{lll}
			\hline
				 \small Method & \small EIG lower bound & \small EIG upper bound  \\
				\hline
			    \small \textbf{ACE} & \small $\mathbf{1.0835\pm 0.0003} $ & \small $\mathbf{1.0852 \pm 0.0001}$ \\
				 \small PCE & \small $1.0825\pm 0.0002$ & \small $1.0839\pm 0.0002$ \\
				 \small BA & \small $1.0780 \pm 0.0003$ & \small $1.0794\pm 0.0003$ \\
				\hline
				\small Expert & \small $1.0191$ & \small $1.0227$ \\
				\hline
			\end{tabular}}
		\end{center}
		\vspace{-6pt}
	\end{table}

\vspace{-3pt}

\subsection{Constant elasticity of substitution}
\label{sec:ces}

We finally turn to \textit{iterated} experimental design in which we produce designs, generate data and make inference repeatedly. This problem therefore captures the end-to-end-process of experimentation and inference.

We consider an experiment in behavioural economics that was previously also considered by \citet{foster2019variational}. In this experiment, a participant is asked to compare baskets $\mathbf{x}, \mathbf{x}'$ of goods. The model assumes that their response (on a slider) is based on the difference in utility of the baskets, and the constant elasticity of substitution (CES) model \citep{arrow1961capital} governed by latent variables $(\rho,\bm{\alpha},u)$ is then used for this utility. The aim is to learn $(\rho,\bm{\alpha},u)$ characterizing the participant's utility. In the experiment, there are 20 sequential steps of experimentation with the same participant. We compare our gradient-based approach against the most successful approach of \cite{foster2019variational} that approximates the marginal density to form an upper bound on EIG, and BO to optimize $\xi$. For full details, see Appendix~\ref{app:ces}. 

Figure~\ref{fig:ces} shows that gradient-based methods are effective on this problem; both ACE and PCE decrease the posterior entropy and RMSEs on the latent variables faster and further than the baseline, whereas BA does not do so well. We suggest that the similar performance of ACE and PCE is due to the smaller changes in the posterior at middle and late steps, after much data has been accumulated: when the posterior does not change much at each step, $p(\theta|y_{1:t-1},\xi_{1:t-1})$ forms an effective proposal for estimating $p(y_t|\xi_t)$.

\begin{figure}[t]
	\centering
	\begin{subfigure}[b]{0.35\textwidth}
		\centering
		\includegraphics[width=\textwidth]{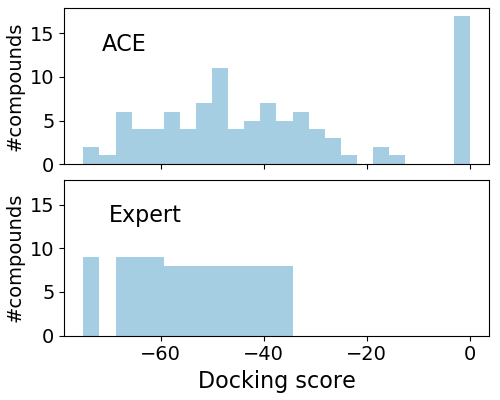}
	\end{subfigure}
	\vspace{-6pt}
	\caption{Designs for the biomolecular docking problem obtained by ACE and by \citet{lyu2019ultra}. Designs consist of 100 docking scores at which to test compounds.}
	\label{fig:docking_histogram}
	\vspace{-6pt}
\end{figure}



\vspace{-4pt}
\section{CONCLUSIONS}
\label{sec:discussion}
\vspace{-4pt}
We have introduced a new approach for Bayesian experimental design that does away with the two stages of estimating EIG and separately optimizing over $\Xi$. We use stochastic gradients to maximize a lower bound on $I(\xi)$ and so find optimal designs by solving a single optimization problem. This unification leads to substantially improved performance, especially on high-dimensional design problems.

Of the three lower bounds, $I_{BA}, I_{ACE}$ and $I_{PCE}$, we note that in all five experiments ACE generally did as well as the better of BA and PCE: we therefore recommend it as the default choice. BA performed well when the inference network could closely approximate the true posterior; PCE performed well when the prior was an adequate proposal for estimating $p(y|\xi)$ and does not require the training of variational parameters.


\clearpage
\section*{Acknowledgements}

AF gratefully acknowledges funding from EPSRC grant no. EP/N509711/1.
YWT's research leading to these results has received funding from the
European Research Council under the European Union's Seventh Framework
Programme (FP7/2007-2013) ERC grant agreement no. 617071.
TR gratefully acknowledges funding from Tencent AI Labs and a junior research fellowship supported by Christ Church, Oxford.

\bibliography{references}
\bibliographystyle{plain}

\clearpage
\appendix

\section{GRADIENT-BASED BOED}
\label{app:method}

We begin with the proof of Theorem~\ref{lemma:ace}, which we restate for convenience.

\lemace*

We add the further technical assumption that $p(\theta)p(y|\theta,\xi)/q_\phi(\theta|y)$ is bounded.

\begin{proof}

To begin with 1., we have the error term $\delta = I(\xi) - I_{ACE}(\xi,\phi,L)$ which can be written
	\begin{align}
	\delta &= \E\left[ \log \frac{\frac{1}{L+1}\sum_{\ell=0}^L \frac{p(\theta_\ell)p(y|\theta_\ell,\xi)}{q_\phi(\theta_\ell|y)}}{p(y|\xi)} \right] \\
	&= \E\left[ \log \frac{\frac{1}{L+1}\sum_{\ell=0}^L p(\theta_\ell|y)\prod_{k\ne\ell}q_\phi(\theta_k|y)}{\prod_{\ell=0}^L q_\phi(\theta_\ell|y)} \right] \\
	&= \E\left[ \log \frac{P(\theta_{0:\ell}|y)}{\prod_{\ell=0}^L q_\phi(\theta_\ell|y)} \right]
	\end{align}
	where the expectation is over $p(y|\xi)p(\theta_0|y,\xi)\prod_{\ell=1}^L q_\phi(\theta_\ell|y)$.
	Note that the integrand is symmetric under a permutation of the labels $0, ..., L$, so its expectation will be the same over the distribution $p(y|\xi)p(\theta_\ell|y,\xi)\prod_{k \ne \ell} q_\phi(\theta_k|y)$. Since $P(\theta_{0:L})$ is a mixture of distributions of this form, this then implies that the expectation will be the same if it is taken over the distribution $p(y|\xi)P(\theta_{0:L})$, yielding
	\begin{equation}
		\delta = \E_{p(y|\xi)P(\theta_{0:L}|y)}\left[ \log \frac{P(\theta_{0:L}|y)}{\prod_{\ell=0}^L q_\phi(\theta_\ell|y)} \right]
	\end{equation}
	which is the expected KL divergence required. We therefore have $\delta \ge 0$.	

	For 2., we use that $p(\theta)p(y|\theta,\xi)/q_\phi(\theta|y)$ is bounded. 
	The ACE denominator is a consistent estimator of the marginal likelihood. Indeed, 
	\begin{equation}
		\frac{1}{L+1} \frac{p(\theta_0)p(y|\theta_0,\xi)}{q_\phi(\theta_0|y)} \to 0
	\end{equation}
	and
	\begin{equation}
		\frac{1}{L+1} \sum_{\ell=1}^L \frac{p(\theta_\ell)p(y|\theta_\ell,\xi)}{q_\phi(\theta_\ell|y)} \to p(y|\xi)\  a.s.
	\end{equation}
	as $L\to \infty$ by the Strong Law of Large Numbers, since
	\begin{equation}
	\E_{q_\phi(\theta|y)}\left[\frac{p(\theta)p(y|\theta,\xi)}{q_\phi(\theta|y)} \right] = p(y|\xi).
	\end{equation}
	This establishes the a.s. pointwise convergence of the ACE integrand to $\log p(y|\theta_0,\xi)/p(y|\xi)$. Hence by Bounded Convergence Theorem, 
	\begin{equation}
	\hat{I}_{ACE}(\xi,\phi, L) \to I(\xi)
	\end{equation}
	as $L\to\infty$. 
	
	To establish 3., we use a similar approach to 1. We let $\varepsilon = I_{ACE}(\xi,\phi,L_2) - I_{ACE}(\xi,\phi,L_1)$. Then
	\begin{align}
	\varepsilon &= \E\left[ \log \frac{\frac{1}{L_1+1}\sum_{\ell=0}^{L_1} \frac{p(\theta_\ell)p(y|\theta_\ell,\xi)}{q(\theta_\ell|y)}}{\frac{1}{L_2+1} \sum_{\ell=0}^{L_2} \frac{p(\theta_\ell)p(y|\theta_\ell,\xi)}{q(\theta_\ell|y)}} \right] \\
	&= \E\left[\log \frac{Q(\theta_{0:L_2}|y)}{\frac{1}{L_2+1}\sum_{\ell=0}^{L_2}p(\theta_\ell|y)\prod_{k\ne\ell}q(\theta_k|y)} \right]	
	\end{align}
	where the expectation is over $p(y|\xi)p(\theta_0|y,\xi)\prod_{\ell=1}^{L_2} q(\theta_\ell|y)$ and 
	\begin{equation}
	Q(\theta_{0:L_2}|y) = \frac{1}{L_1+1}\sum_{\ell=0}^{L_1}p(\theta_\ell|y)\prod_{k\ne\ell}^{L_2}q(\theta_k|y).
	\end{equation}
	As in 1., the integrand is unchanged if we permute the labels $0, ..., {L_1}$. By this symmetry, the expectation is the same when taken over the distribution $p(y|\xi)Q(\theta_{0:{L_2}}|y)$. We therefore recognise $\varepsilon$ as the expectation of a KL divergence. Hence $\varepsilon \ge 0$ as required.
	
	4. follows by Bayes Theorem, i.e.
	\begin{equation}
	\frac{p(\theta)p(y|\theta,\xi)}{p(\theta|y,\xi)} = p(y|\xi).
	\end{equation}
	which completes the proof.
\end{proof}

We also present the proof of Theorem~\ref{lemma:lf}.

\lemlf*

\begin{proof}
Initially, we note that the contrastive samples $\theta_1, ..., \theta_L$ do not carry additional information about $\theta_0$. Formally, we consider the mutual information between $\theta_0$ and the random variable $(y,\theta_1, ..., \theta_L)$. Using the Chain Rule for mutual information we have
\begin{equation}
\begin{split}
&\text{MI}(\theta_0;(y,\theta_1,...,\theta_L)) \\ &\ = \text{MI}(\theta_0;y) + \text{MI}(\theta_0;(\theta_1,...,\theta_L)|y)
\end{split}
\end{equation}
Now $\text{MI}(\theta_0;(\theta_1,...,\theta_L)|y)=0$ since $\theta_\ell$ ($\ell>0$) are conditionally independent of $\theta_0$ given $y$. Therefore
\begin{equation}
\text{MI}(\theta_0;(y,\theta_1,...,\theta_L)) = \text{MI}(\theta_0;y) = I(\xi).
\end{equation}
We now use the Donsker-Varadhan representation of mutual information \citep{donsker1975asymptotic}. Specifically, for random variables $A, B$ with joint distribution $p(a,b)$ and any measurable function $T(a,b)$ we have
\begin{equation}
\begin{split}
	&\text{MI}(A;B) \\ &\ \ge \E_{p(a,b)}[T(a,b)] - \log\E_{p(a)p(b)}\left[e^{T(a,b)}\right].
\end{split}
	\label{eq:dv}
\end{equation}
We now use this representation with $a=\theta_0, b=(y,\theta_1, ..., \theta_L)$ and $T(a,b)$ the integrand 
\begin{equation}
T(\theta_0, (y, \theta_{1:L})) = \log \frac{f_\psi(\theta_0,y)}{\frac{1}{L+1}\sum_{\ell=0}^L\frac{p(\theta_\ell)f_\psi(\theta_\ell,y)}{q_\phi(\theta_\ell|y)}}.
\end{equation}
We compute the second term in \eqref{eq:dv}, $Z = \E_{p(a)p(b)}\left[e^{T(a,b)}\right]$.
\begin{align}
	Z&=\E_{p(\theta_0)p(y|\xi)q_\phi(\theta_{1:L}|y)}\left[ \frac{f_\psi(\theta_0,y)}{\frac{1}{L+1}\sum_{\ell=0}^L\frac{p(\theta_\ell)f_\psi(\theta_\ell,y)}{q_\phi(\theta_\ell|y)}} \right] \\
	&=\E_{p(y|\xi)q_\phi(\theta_{0:L}|y)}\left[ \frac{\frac{p(\theta_0)f_\psi(\theta_0,y)}{q_\phi(\theta_0|y)}}{\frac{1}{L+1}\sum_{\ell=0}^L\frac{p(\theta_\ell)f_\psi(\theta_\ell,y)}{q_\phi(\theta_\ell|y)}} \right]  \\
	&= \E_{p(y|\xi)q_\phi(\theta_{0:L}|y)}\left[ \frac{\frac{1}{L+1}\sum_{\ell=0}^L\frac{p(\theta_\ell)f_\psi(\theta_\ell,y)}{q_\phi(\theta_\ell|y)}}{\frac{1}{L+1}\sum_{\ell=0}^L\frac{p(\theta_\ell)f_\psi(\theta_\ell,y)}{q_\phi(\theta_\ell|y)}} \right] \\
	&= 1
\end{align}
where the second to last line follows by symmetry. This establishes that $\log Z=0$, and so \eqref{eq:ace_lf} constitutes a valid lower bound on $I(\xi)$. That is
\begin{align}
	I(\xi) &\ge \E\left[ \log \frac{f_\psi(y,\theta_0)}{\frac{1}{L+1}\sum_{\ell=0}^L \frac{p(\theta)f_\psi(y,\theta_\ell)}{q_\phi(\theta_\ell,y)}} \right]
	\label{eq:proportionality}
\end{align}
which completes the proof.
\end{proof}

The following theorem establishes a condition under which the maximum of the ACE objective converges to the maximum of the EIG as $L\to\infty$.

\begin{restatable}{theorem}{thmconv}
\label{thm:conv}
Consider a model $p(\theta)p(y|\theta,\xi)$ such that
\begin{equation}
	C \triangleq \sup_{\xi \in \Xi}\inf_{\phi\in\Phi} \E_{p(\theta)p(y|\theta,\xi)}\left[ \frac{p(\theta|y,\xi)}{q_\phi(\theta|y,\xi)} \right] < \infty.
\end{equation}
and $I^* \triangleq \sup_{\xi \in \Xi} I(\xi) < \infty$.
Let $q_\phi(\theta|y)$ be an inference network and let
\begin{equation}
	I_L = \sup_{\xi \in \Xi, \phi \in \Phi} I_{ACE}(\xi, \phi, L).
\end{equation}
Then,
\begin{equation}
	0 \le I^* - I_L \le \frac{C-1}{L+1}
\end{equation}
and in particular $I_L \to I^*$ as $L \to \infty$.
\end{restatable}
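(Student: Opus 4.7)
Proof proposal. The lower bound $0 \le I^* - I_L$ is essentially free: by part~1 of Theorem~\ref{lemma:ace}, $I_{ACE}(\xi,\phi,L) \le I(\xi) \le I^*$ for every $\xi,\phi,L$, and taking a supremum over $(\xi,\phi)$ on the left gives $I_L \le I^*$. All the work therefore lies in the upper bound $I^* - I_L \le (C-1)/(L+1)$; the limit $I_L \to I^*$ then follows immediately from $C < \infty$ and $L \to \infty$.

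To obtain the rate, I would start from the error expression used in the proof of Theorem~\ref{lemma:ace},
\begin{equation*}
    \delta(\xi,\phi) \triangleq I(\xi) - I_{ACE}(\xi,\phi,L) = \E\left[\log \frac{\tfrac{1}{L+1}\sum_{\ell=0}^L \tfrac{p(\theta_\ell)p(y|\theta_\ell,\xi)}{q_\phi(\theta_\ell|y)}}{p(y|\xi)}\right],
\end{equation*}
with the expectation over $p(\theta_0)p(y|\theta_0,\xi)q_\phi(\theta_{1:L}|y)$. Applying $\log x \le x-1$ converts this into a sum of $L+1$ ordinary expectations that can be evaluated term by term. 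For each contrastive index $\ell \ge 1$ the integrand depends only on $(\theta_\ell,y)$, whose marginal is $p(y|\xi)q_\phi(\theta_\ell|y)$, so the importance weight $p(\theta_\ell)p(y|\theta_\ell,\xi)/[q_\phi(\theta_\ell|y)p(y|\xi)]$ has expectation exactly $1$. The $\ell=0$ term is the interesting one: using $p(\theta_0)p(y|\theta_0,\xi) = p(y|\xi)p(\theta_0|y,\xi)$, it equals $E_\xi(\phi) \triangleq \E_{p(\theta_0)p(y|\theta_0,\xi)}[p(\theta_0|y,\xi)/q_\phi(\theta_0|y)]$, i.e.\ precisely the quantity whose $\sup_\xi\inf_\phi$ defines $C$. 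Combining yields
\begin{equation*}
    \delta(\xi,\phi) \;\le\; \frac{1}{L+1}\bigl(E_\xi(\phi) - 1\bigr).
\end{equation*}

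The final step is a sup-inf manipulation. Since $I_{ACE}(\xi,\phi,L) \le I_L$ for every $(\xi,\phi)$, we have $I(\xi) \le I_L + (E_\xi(\phi)-1)/(L+1)$ for every $\phi$. The left-hand side does not depend on $\phi$, so we may take $\inf_\phi$ on the right; then taking $\sup_\xi$ gives $I^* \le I_L + (C-1)/(L+1)$, as required. I expect this sup-inf bookkeeping to be the only delicate part of the argument: the order of the two optima has to match the order in the definition of $C$, but it does, precisely because $\phi$ is fixed last when bounding $I(\xi)$. The analytic steps (linearisation via $\log x \le x-1$ and evaluation of the importance weights) are otherwise routine.
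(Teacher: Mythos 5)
Your proof is correct and follows essentially the same route as the paper's: the same error expression inherited from Theorem~\ref{lemma:ace}, the same linearisation (your $\log x \le x-1$ is the paper's $\log(1+x)\le x$ in disguise), the same term-by-term evaluation in which the $\ell \ge 1$ importance weights integrate to one and the $\ell = 0$ term produces $\E_{p(\theta_0)p(y|\theta_0,\xi)}\left[p(\theta_0|y,\xi)/q_\phi(\theta_0|y)\right]$, and the same $\sup_\xi \inf_\phi$ bookkeeping to conclude. No gaps; your remark that the order of optima matches the definition of $C$ because $\phi$ is optimised last is exactly the step the paper carries out via $I^* - I_L \le \sup_{\xi}\inf_{\phi}\Delta(\xi,\phi,L)$.
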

\begin{proof}
We have $0 \le I^* - I_L$ since $I_{ACE}$ is a lower bound on $I(\xi)$ by Theorem~\ref{lemma:ace}.

Next, we consider $\Delta(\xi,\phi,L) = I(\xi) - I_{ACE}(\xi, \phi, L)$. We have
\begin{equation}
\label{eq:difference}
	\Delta = \E_{p(\theta_0)p(y|\theta_0,\xi)q_\phi(\theta_{1:L}|y)}\left[ \log \frac{Y_L}{p(y|\xi)} \right]
\end{equation}
where
\begin{equation}
	Y_L =\frac{1}{L+1} \sum_{\ell=0}^L w_\ell \quad\text{and}\quad
	w_\ell = \frac{p(\theta_\ell)p(y|\theta_\ell,\xi)}{q_\phi(\theta_\ell|y)};
\end{equation}
we write \eqref{eq:difference} as
\begin{equation}
	\Delta = \E\left[\log \left(1 + \frac{Y_L - p(y|\xi)}{p(y|\xi)} \right)\right]
\end{equation}
and we apply the inequality $\log(1+x)\le x$ to give
\begin{equation}
	\Delta \le \E\left[ \frac{Y_L - p(y|\xi)}{p(y|\xi)} \right].
\end{equation}
We now observe that for $\ell > 0$, $\E_{q_\phi(\theta_\ell|y)}[w_\ell] = p(y|\xi)$ and hence, taking a partial expectation over $\theta_{1:L}$ we have
\begin{align}
	\Delta &\le \E_{p(\theta_0)p(y|\theta_0,\xi)}\left[ \frac{w_0-p(y|\xi)}{(L+1)p(y|\xi)} \right] \\
	& \le \frac{1}{L+1}\left(\E_{p(\theta_0)p(y|\theta_0,\xi)}\left[ \frac{p(\theta_0|y,\xi)}{q_\phi(\theta_0|y)}\right] - 1\right)
\end{align}
Hence
\begin{align}
	I^* - I_L &= \sup_{\xi \in \Xi}I(\xi) - \sup_{\xi\in\Xi,\phi\in\Phi} I_{ACE}(\xi,\phi,L) ] \\
	 &\le \sup_{\xi \in \Xi}[ I(\xi) - \sup_{\phi\in\Phi} I_{ACE}(\xi,\phi,L) ] \\
	& \le \sup_{\xi\in\Xi}\inf_{\phi\in\Phi}[\Delta(\xi,\phi,L)] \\
	& \le \frac{C-1}{L+1}
\end{align}
as required.
\end{proof}

\subsection{Double reparametrization}
\label{app:drep}
We have the $\phi$-gradient of the ACE objective
\begin{equation}
	\pypx{{I}_{ACE}}{\phi} = \E_{p(\theta_0)p(y|\theta_0,\xi)}\left[ \left . -\pypx{\mathcal{L}}{\phi}\right\rvert_{\theta_0,y} \right]
\end{equation}
where $\mathcal{L}$ is our estimate of the marginal likelihood with gradient
\begin{equation}
	\left . \pypx{\mathcal{L}}{\phi}\right\rvert_{\theta_0,y} = \ppx{\phi}\E_{q_\phi(\theta_{1:L}|y)}\left[\log \left. \left(\sum_{\ell=0}^{L} w_\ell \right) \right\rvert \theta_0,y \right]
\end{equation}
where
\begin{equation}
	w_\ell = \frac{p(\theta_\ell)p(y|\theta_\ell,\xi)}{q_\phi(\theta_\ell|y)}.
\end{equation}
If $q_\phi(\theta|y)$ is reparameterizable as a function of $\phi$, then we can apply \textit{double} reparameterization to this gradient. Indeed, were it not for the $w_0$ term, this would be exactly the IWAE of \citet{burda2015importance}. We exploit the double reparameterization of \citet{tucker2018doubly} with a minor variation to account for $w_0$ to obtain a low variance gradient estimator. 

The doubly reparametrized gradient for ACE takes the form
\begin{equation}
\pypx{{I}_{ACE}}{\phi} = \E_{p(\theta_0)p(y|\theta_0,\xi)q_\phi(\theta_{1:L}|y)}\left[ \sum_{\ell=0}^L v_\ell  \right]
\end{equation} 
where
\begin{equation}
v_0 = \frac{w_0}{\sum_{m=0}^{L}w_m}\ppx{\phi}\log q_\phi(\theta_0|y)
\end{equation}
and for $\ell > 0$
\begin{equation}
v_\ell = -\left(\frac{w_\ell}{\sum_{m=0}^{L}w_m} \right)^2 \pypx{\log w_\ell}{\theta_\ell}\pypx{\theta_\ell}{\phi}.
\end{equation}

\subsection{Alternative gradient}
We begin with an observation: the true integrand when computing the EIG as an expectation over $p(\theta)p(y|\theta,\xi)$ is given by
\begin{equation}
	g_*(y,\theta,\xi) = \log\frac{p(y|\theta,\xi)}{p(y|\xi)}.
\end{equation}
Recall the score function identity
\begin{equation}
	\E_{p(x|\xi)}\left[ \ppx{\xi}\log p(x|\xi) \right] = 0.
\end{equation}
We have
\begin{align}
\E_{p(\theta)p(y|\theta,\xi)}&\left[\pypx{g_*}{\xi} \right] \\
&= \E_{p(\theta)p(y|\theta,\xi)}\left[\ppx{\xi}\log\frac{p(y|\theta,\xi)}{p(y|\xi} \right] \\
\begin{split}
&= \E_{p(\theta)}\left(\E_{p(y|\theta,\xi)}\left[\ppx{\xi}p(y|\theta,\xi) \right]\right) \\
&\quad- \E_{p(y|\xi)}\left[ \ppx{\xi}\log p(y|\xi) \right]
\end{split} \\
&=0
\end{align}
by two applications of the score function identity.
This suggests that, as $g$ becomes close to $g_*$, the $\partial g/\partial \xi$ term in \eqref{eq:reinforce} has expectation close to zero, and primarily contributes variance to the gradient estimator.

Theorem~\ref{lemma:lf} shows that if we remove the $\partial g/\partial \xi$ term, the resulting algorithm still optimizes a valid lower bound on $I(\xi)$. 
Specifically, removing this term is equivalent to the following gradient-coordinate algorithm. First, we choose the family $f_\psi(\theta,y)$ to be $p(y|\theta,\psi)$. Then at time step $t$ we do the following
\begin{enumerate}
	\item Set $\psi_t = \xi_t$
	\item Take a gradient step with respect to $(\xi, \phi)$ to update $\xi_t, \phi_t$
\end{enumerate}
Importantly, the new gradient does not include a $\partial g/\partial \xi$ term, but is the gradient of a valid lower bound on EIG. In practice, this alternative gradient did not yield substantially different performance from the standard approach of including the $\partial g / \partial \xi$ term. All our experiments used the standard approach for simplicity.

\section{EXPERIMENTS}

\subsection{Implementation}
All experiments were implemented in PyTorch 1.4.0 \citep{pytorch} and Pyro 0.3.4 \citep{pyro}. Supporting code can be found at \path{https://github.com/ae-foster/pyro/tree/sgboed-reproduce}, see `README.md` for details on how to run the experiments.

\subsection{Death process}
\label{app:death}

\begin{table}
	\vspace{-6pt}
	\begin{center}
		\caption{Death process. We present the final EIG for each method (computed using NMC with 200000 samples).}
		\label{tab:death_eig_full}
		{\renewcommand{\arraystretch}{1}
			\setlength\tabcolsep{4pt} 
			\begin{tabular}{lr}
			\hline
				 \small Method & \small EIG mean $\pm1$ s.e.  \\
				\hline
			    \small \textbf{ACE} & \small $\mathbf{0.9830 \pm 0.0001}$ \\
				 \small PCE & \small $0.9822\pm 0.0001$ \\
				 \small BA & \small $0.9822 \pm 0.0002$ \\
				 \small ACE without RB & \small $0.9789\pm 0.0006$ \\
				 \small PCE without RB & \small $0.9710\pm 0.0025$ \\
				 \small BA without RB & \small $0.9322\pm 0.0045$ \\
				\hline
				 \small BO with NMC & \small $0.9732 \pm 0.0009$ \\
				\hline
			\end{tabular}}
		\end{center}
	\end{table}

We place the prior $\theta \sim \text{LogNormal}(0, 1)$ on the infection rate and have the likelihood
\begin{align}
\begin{split}
	I_1 &\sim \text{Binomial}(N, e^{-\theta\xi_1}) \\
	I_2 &\sim \text{Binomial}(N - I_1, e^{-\theta\xi_2}).
\end{split}
\end{align}
We also have the constraint $\xi_1,\xi_2\ge 0$.

For each method, we fixed a computational budget of 120 seconds, and did 100 independent runs. For gradient methods, we used the Adam optimizer \citep{kingma2014adam} with learning rate $10^{-3}$ and the default momentum parameters. The inference network made a separate Gaussian approximation to the posterior for each of the 66 outcomes. To evaluate $I(\xi)$ for comparison we used NMC with a large number of samples: 20000 for Figure~\ref{fig:death_rb_bo} and 200000 for the final values in the caption and in Table~\ref{tab:death_eig_full}. For the BO, we used a Matern52 kernel with variance 1 and lengthscale 0.25, and the GP-UCB1 algorithm \citep{srinivas2009gaussian} for acquisition.

{We used the following number of samples for our Rao-Blackwellized estimators
\begin{center}
\begin{tabular}{lll}
	Method & Number of samples \\
	\hline
	ACE & 10 + 660 \\
	PCE & 10  \\
	BA & 10 \\
	\hline
	NMC & 2000 \\
\end{tabular}
\end{center}}

\subsection{Regression}
\label{app:regression}
We consider the following prior on $\theta = (\mathbf{w}, \sigma)$
\begin{align}
	w_j &\iid \text{Laplace}(1) \text{ for }j=1,...,p \\
	\sigma & \sim \text{Exponential}(1) 
\end{align}
with the likelihood
\begin{equation}
	y_i \sim N\left(\sum_{j=1}^p \xi_{ij}w_j, \sigma\right) \text{ for }i=1,...n.
\end{equation}
This represents a standard regression model, although with non-Gaussian prior distributions we cannot compute the posterior or true EIG analytically. To ensure the EIG has a finite maximum, we impose the following constraint
\begin{equation}
	\sum_j |\xi_{ij}| = 1 \text{ for }i=1,...,n.
\end{equation}
In practice, we set $n=p=20$.

For each of our five methods, we fixed the computational budget to 15 minutes and did 10 independent runs. For gradient methods, we used a learning rate of $10^{-3}$ and the Adam optimizer with default momentum parameters. The inference network used the following variational family
\begin{align}
	\mathbf{w} &\sim N(\bm{\mu}, s\Sigma_0) \\
	\sigma &\sim \Gamma(\alpha,\beta)
\end{align}
and we used a neural network with the following architecture
\begin{center}
\begin{tabular}{lll}
	Operation & Size & Activation \\
	\hline
	Input $\rightarrow$ H1 & 64 & ReLU \\
	H1 $\rightarrow$ H2 & 64 & ReLU \\
	H2 $\rightarrow \bm{\mu}$ & 20 & - \\
	H2 $\rightarrow (\alpha,\beta)$ & 2 & Softplus \\
	H2 $\rightarrow s$ & 1 & Softplus \\
	$\Sigma_0$ & $20 \times 20$ & - \\
\end{tabular}
\end{center}

\begin{figure}[t]
	\centering
	\begin{subfigure}[b]{0.40\textwidth}
		\centering
		\includegraphics[width=\textwidth]{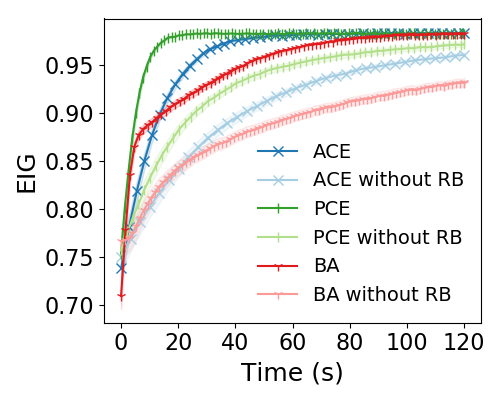}
	\end{subfigure}
	\vspace{-1pt}
	\caption{The EIG against time for the death process: comparing Rao-Blackwellization against no Rao-Blackwellization. Each method had a 120 second time budget.}
	\label{fig:death_rb_nrb}
\end{figure}

For BO and random search, point evaluations of $I(\xi)$ were made using VNMC. Each VNMC evaluation took 1000 steps, with the optimization as above (but with $\xi$ fixed). We used a GP with Matern52 kernel with lengthscale 5, variance 10. We used a GP-UCB1 acquisition rule, and terminated once 15 minutes had passed. For random search, we sampled designs using a standard unit Gaussian.

We used the following number of samples
\begin{center}
\begin{tabular}{lll}
	Method & Inner samples $L$ & Outer samples $N$ \\
	\hline
	ACE & 10 & 10 \\
	PCE & 10 & 10 \\
	BA & n/a & 100 \\
	\hline
	VNMC & 10 & 10 \\
\end{tabular}
\end{center}

To evaluate designs, we used ACE/VNMC. We first trained ACE using the same procedure as above, for 20000 steps. Then we made the final ACE/VNMC evaluations using the fixed inference network and $L=2.5\times10^{3}$ inner samples, $N=10^{5}$ outer samples.

\subsection{Advertising}
\label{app:largedim}
We introduce a LogNormal likelihood and a $D$-dimensional latent variable $\bm{\theta}$ governed by a Normal prior, the joint density of our model is 
\begin{equation}
p(\bm{y}, \bm{\theta} | \bm{\xi}) = \mathcal{LN}(\bm{y} | \bm{\theta} \odot \bm{\xi}, \sigma^2 \bm{\xi}) \mathcal{N}(\bm{\theta} | \bm{0}, \bm{\Lambda}_0)
\end{equation}
where $\sigma$ controls the observation noise, $\bm{\Lambda}_0$ is a non-diagonal precision matrix and $\odot$ denotes the Hadamard product. Since there are correlations
among the $D$ regions, the optimal advertising budget (w.r.t.~gaining information about $\bm{\theta}$) allocates more money to the regions that are tightly correlated. 

Throughout we assume that the number of regions $D$ is even.
We set the budget to scale with the number of dimensions, $B=\tfrac{D}{2}$, set $\sigma=1$ and choose
the prior precision matrix to be
\begin{equation}
\nonumber
\bm{\Lambda}_0 = (1+\tfrac{1}{D}) \mathbb{I}_D - \tfrac{1}{D}\bm{u} \bm{u}^{\rm T} \qquad \bm{u}^{\rm T}  
\equiv (\alpha, ...,\alpha, 1, ..., 1) \end{equation}
where the first $\tfrac{D}{2}$ components of $\bm{u}$ equal $\alpha$ and the last $\tfrac{D}{2}$ components equal $1$. We shall see that $\alpha=0.1$ controls the degree of asymmetry in the optimal design. Discarding an irrelevant constant,
we can compute the exact EIG using the formula:
\begin{equation}
\nonumber
I(\bm{\xi}) = \frac{1}{2 }\log\det \bm{\Lambda}_{\rm post} \qquad
\bm{\Lambda}_{\rm post}  = \bm{\Lambda}_0 + \frac{1}{\sigma^2}{\rm diag}(\bm{\xi})
\end{equation}
Using the matrix determinant lemma for rank-1 matrix updates we can then compute
\begin{equation}
\nonumber
\begin{split}
&{\rm log \; det\;} \bm{\Lambda}_{\rm post} = 
\sum_{i=1}^D \log (1 +  \tfrac{1}{D} + \xi_i) + \\
&\ \log \left(1 -  \sum_{i=1}^{\tfrac{D}{2}} \left \{ \tfrac{\alpha^2}{1 + \tfrac{1}{D} + \xi_i} \right\} 
- \sum_{i=1+\tfrac{D}{2}}^{D} \left \{ \tfrac{1}{1 + \tfrac{1}{D} + \xi_i} \right\} \right).
\end{split}
\end{equation}
By symmetry the optimum (it is easy to check that it is a maximum) of ${\rm EIG}(\bm{\xi})$ will 
satisfy $\xi_i = \xi_{i+1}$ for $i=1,...,\tfrac{D}{2}-1, \tfrac{D}{2}+1, ..., D$. In other words $\bm{\xi}$ is entirely specified
by $\xi_1$ and $\xi_D$, which must satisfy $\xi_1 + \xi_D = 1$ because of the constraint on the budget $B=\tfrac{D}{2}$.
Thus we have reduced the EIG maximization problem to a univariate optimization problem that can easily be solved to machine precision, for example by
gradient methods or brute force bisection. This analytic solution gives us the ground truth EIG, used within BO and for evaluation, and the true optimal design, used for evaluation.

For each of the four methods (ACE, PCE, BA and BO) we fix the computational budget to 120 seconds per design optimization. 
For the gradient-based methods this corresponds to $1\times 10^4$, $2\times 10^4$, and $1.8\times 10^4$ gradient steps for ACE, PCE, and BA, respectively. 
For the BO baseline, we run 110 steps of a GP-UCB-like algorithm \citep{srinivas2009gaussian} in batch-mode, 
resulting in a total budget of $1650$ function evaluations of the EIG oracle. 
Note that for all four methods the runtime dependence on the dimension $D$ is negligible in the regime in which we are operating; consequently we use the same number of gradient or BO steps for all $D$.

For the gradient-based methods, we use the Adam optimizer with default momentum hyperparameters
and an initial learning rate of $\ell_0 = 0.1$ that is exponentially
decayed towards a final learning rate $\ell_f$ that depends on the particular method. In particular we set 
$\ell_f = 1\times10^{-4}$, $\ell_f = 1 \times 10^{-5}$, and $\ell_f = 3 \times 10^{-4}$ for the ACE, PCE, and BA methods, respectively.
For the BO baseline, we used a Mat{\'e}rn kernel with a fixed length scale $\ell=0.2$. These hyperparameters were chosen by running
a grid search with $D=16$ and choosing hyperparameters that minimized the mean absolute EIG error.

Finally we note that in Fig.~\ref{fig:varydim} at each dimension $D$ we normalize the EIG by the factor
\begin{equation}
Z = {\rm EIG}(\bm{\xi}^*) - {\rm EIG}(\bm{\xi}_{\rm uniform}) 
\end{equation}
where $\bm{\xi}^*$ and $\bm{\xi}_{\rm uniform}$ are the optimal and uniform budget designs, respectively. Consequently
after normalization the absolute error for the uniform budget design $\bm{\xi}_{\rm uniform}$ is equal to 1.

\subsection{Biomolecular docking}
\label{app:docking}
For the docking model, we used the following independent priors
\begin{align}
	\text{top} & \sim \text{Beta}(25, 75) \\
	\text{bottom} & \sim \text{Beta}(4, 96) \\
	\text{ee50} & \sim N(-50, 15^2) \\
	\text{slope} & \sim N(-0.15, 0.1^2).
\end{align}
For the design $\xi = (\xi_1, ..., \xi_{100})$ we had 100 binary responses
\begin{equation}
	y_i \sim \text{Bern}\left( \text{bottom} + \frac{\text{top} - \text{bottom}}{1 + e^{-(\xi_i - \text{ee50}) \times \text{slope}}} \right).
\end{equation}

For gradient methods, we used the Adam optimizer with learning rate $10^{-3}$ and default momentum parameters. For each method, we took $5\times10^{5}$ gradient steps (each method converged within this number of steps). The inference network was mean-field with the same distributional families as the prior. We used the following neural architecture
\begin{center}
\begin{tabular}{lll}
	Operation & Size & Activation \\
	\hline
	Input $\rightarrow$ H1 & 64 & ReLU \\
	H1 $\rightarrow$ H2 & 64 & ReLU \\
	H2 $\rightarrow$ top & 2 & Softplus \\
	H2 $\rightarrow$ bottom & 2 & Softplus \\
	H2 $\rightarrow$ ee50 mean & 1 & - \\
	H2 $\rightarrow$ ee50 s.d. & 1 & Softplus \\
	H2 $\rightarrow$ slope mean & 1 & - \\
	H2 $\rightarrow$ slope s.d. & 1 & Softplus \\
\end{tabular}
\end{center}

We used the following number of samples
\begin{center}
\begin{tabular}{lll}
	Method & Inner samples $L$ & Outer samples $N$ \\
	\hline
	ACE & 10 & 10 \\
	PCE & 10 & 10 \\
	BA & n/a & 100 \\
\end{tabular}
\end{center}

For the expert method, the design of \citet{lyu2019ultra}, which comprised 580 compounds, was subsampled to comprise 100 compounds for a fair comparison.

For evaluation, we used ACE/VNMC, first training ACE for 25000 steps using the same learning rate as above. With the fixed inference network, we made ACE and VNMC evaluations using $L=2\times 10^3$ inner samples, $N=4\times 10^6$ outer samples.

\subsection{Constant elasticity of substitution}
\label{app:ces}
We used the exact set-up of \citet{foster2019variational}. Specifically, we take $U(\mathbf{x}) = \left(\sum_i x_i^\rho\alpha_i \right)^{1/\rho}$ and place the following priors on $\rho, \bm{\alpha},u$
\begin{align}
	\rho &\sim \text{Beta}(1,1) \\
	\bm{\alpha} & \sim \text{Dirichlet}([1, 1, 1]) \\
	\log u &\sim N(1, 3) \\
	\mu_\eta &= u \cdot (U(\mathbf{x}) - U(\mathbf{x}')) \\
	\sigma_\eta &= \tau u \cdot (1 + \|\mathbf{x} - \mathbf{x}'\|) \\
	\eta & \sim N(\mu_\eta, \sigma_\eta^2) \\
	y &= f(\eta)
\end{align}
where $f$ is the censored sigmoid function and $\tau=0.005$. All designs $\xi = (\mathbf{x},\mathbf{x}')$ were constrained to $[0, 100]^6$.

For gradient methods, we used the Adam optimizer with learning rate $10^{-3}$ and default momentum parameters. To make the design process 120 seconds per step, we used the following number of gradient steps
\begin{center}
\begin{tabular}{ll}
Method & Number of steps \\
\hline
ACE & 1500 \\
PCE & 2500 \\
BA & 5000
\end{tabular}
\end{center}
We found that there was insufficient time to effectively train a neural network guide. Instead we used a mean-field variational family with the same distributional families as the prior, and a linear model using the following features: $\text{logit}(y), \log| \text{logit}(y)|, \bm{1}(y > 0.5)$.

We used the following number of samples
\begin{center}
\begin{tabular}{lll}
	Method & Inner samples $L$ & Outer samples $N$ \\
	\hline
	ACE & 10 & 10 \\
	PCE & 10 & 10 \\
	BA & n/a & 100 \\
\end{tabular}
\end{center}

For the baseline, we used the marginal upper bound of \citet{foster2019variational} with the same variational family used in that paper---an $f$-transformed Normal with additional point masses at the end-points. We used a GP with a Mat{\'e}rn52 kernel, lengthscale 20, variance set from data, and a GP-UCB1 algorithm to make acquisitions which were done in batches of 8.

At each stage of the sequential experiment, the posterior was fitted using mean-field variational inference using the same distributional families as the prior.

\section{FUTURE WORK}
In this paper, we have focused on continuous design spaces in which gradient methods are applicable. One possible extension of our work would be to facilitate a unified one-stage approach to experimental design over \textit{discrete} design spaces. In this case, the lower bounds $I_{BA}, I_{ACE}$ and $I_{PCE}$ remains valid, and performing a joint maximization over $(\xi, \phi)$ on any of these objectives may be an attractive choice, although gradient optimization would no longer be appropriate for $\xi$. We envisage that one could apply existing methods for discrete optimization to the joint optimization problem over design and variational parameters. For instance, a continuous relaxation of the discrete variables, or MCMC-style updates on the discrete variables might be used. Future work might further explore this direction.


\end{document}